\DeclareMathOperator{\norm}{norm}
\DeclareMathOperator{\softmax}{softmax}
\definecolor{bluegray}{rgb}{0.4, 0.6, 0.8}
\definecolor{electriclime}{rgb}{0.8, 1.0, 0.0}
\definecolor{malachite}{rgb}{0.04, 0.85, 0.32}
\definecolor{darkred}{rgb}{0.55, 0.0, 0.0}
\definecolor{darkblue}{rgb}{0.0, 0.0, 0.55}
\definecolor{darkgreen}{rgb}{0.0, 0.2, 0.13}
\definecolor{darkorchid}{rgb}{0.6, 0.2, 0.8}
\newtheorem{theorem}{Theorem}[section]
\newtheorem{lemma}[theorem]{Lemma}
\newtheorem{proposition}[theorem]{Proposition}
\newtheorem{corollary}[theorem]{Corollary}
\newcommand{\xb}{\mathbf{x}}
\newcommand{\vb}{\mathbf{v}}
\newcommand{\qb}{\mathbf{q}}
\newcommand{\yb}{\mathbf{y}}
\newcommand{\kb}{\mathbf{k}}
\newcommand{\pb}{\mathbf{p}}
\newcommand{\zb}{\mathbf{z}}
\newcommand{\ab}{\mathbf{a}}
\newcommand{\psib}{\boldsymbol{\psi}}
\newcommand{\Lambdab}{\boldsymbol{\Lambda}}
\newcommand{\Vb}{\mathbf{V}}
\newcommand{\Qb}{\mathbf{Q}}
\newcommand{\Kb}{\mathbf{K}}
\newcommand{\Ab}{\mathbf{A}}
\newcommand{\Bb}{\mathbf{B}}
\newcommand{\Ib}{\mathbf{I}}
\DeclareMathAlphabet{\mathbb}{U}{msb}{m}{n} 
\newcommand{\R}{\mathbb{R}}
\title{Transformers need glasses!\ \faGlasses
 \\ Information over-squashing in language tasks}
\author{%
    Federico Barbero\thanks{Work performed while at Google DeepMind.} \\
  University of Oxford\\
  \texttt{federico.barbero@cs.ox.ac.uk}
  \And
  Andrea Banino\\
  Google DeepMind\\
  \texttt{abanino@google.com}
  \And 
  Steven Kapturowski \\
  Google DeepMind\\
  \texttt{skapturowski@google.com}
  \And 
  Dharshan Kumaran \\
  Google DeepMind\\
  \texttt{dkumaran@google.com}
  \And
   João G.M. Araújo \\
   Google DeepMind\\
   \texttt{joaogui@google.com}
  \And 
  Alex Vitvitskyi \\
  Google DeepMind\\
  \texttt{avlife@google.com}
  \And 
  Razvan Pascanu \\
  Google DeepMind\\
  \texttt{razp@google.com}
  \And
  Petar Veličković \\
  Google DeepMind\\
  \texttt{petarv@google.com}
}
\begin{document}

\maketitle

\begin{abstract}
We study how information propagates in decoder-only Transformers, which are the architectural backbone of most existing frontier large language models (LLMs). We rely on a theoretical signal propagation analysis---specifically, we analyse the representations of the last token in the final layer of the Transformer, as this is the representation used for next-token prediction. Our analysis reveals a \emph{representational collapse} phenomenon: we prove that certain distinct sequences of inputs to the Transformer can yield arbitrarily close representations in the final token. This effect is exacerbated by the low-precision floating-point formats frequently used in modern LLMs. As a result, the model is provably unable to respond to these sequences in different ways---leading to errors in, e.g., tasks involving counting or copying. Further, we show that decoder-only Transformer language models can lose sensitivity to specific tokens in the input, which relates to the well-known phenomenon of \emph{over-squashing} in graph neural networks. We provide empirical evidence supporting our claims on contemporary LLMs. Our theory also points to simple solutions towards ameliorating these issues. 
\end{abstract}

\section{Introduction}

In recent years the field of Natural Language Processing (NLP) has been revolutionised through the introduction of Transformer-based architectures \cite{vaswani2017attention}. Large Transformers trained on some version of next-token prediction, known as \emph{Large} Language Models (LLMs), have demonstrated impressive performance across different tasks, including conversational agents \cite{team2023gemini, openai2023gpt}, understanding multi-modal inputs \cite{alayrac2022flamingo}, and code completion \cite{li2022competition}. Most contemporary LLMs specifically focus on the decoder part of the original Transformer architecture, and are commonly referred to as \emph{decoder-only} Transformers. Consequently, we focus primarily on such models in this paper. 

\begin{figure}[!t]
    \centering
    \subfloat[\centering Representational Collapse]{{\includegraphics[width=0.45\textwidth]{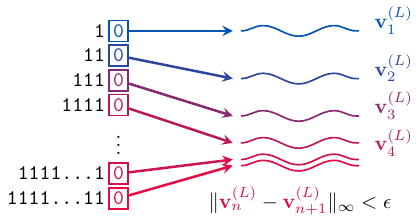} }}%
    \qquad
    \subfloat[\centering Over-squashing]{{\includegraphics[width=0.45\textwidth]{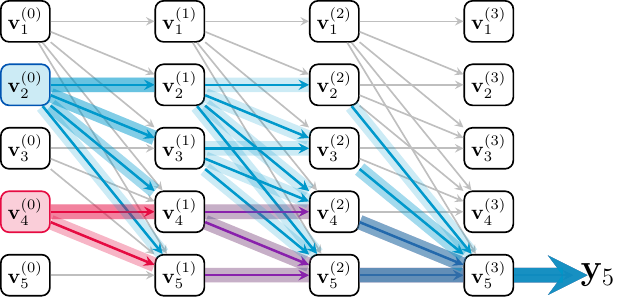} }}%
    \caption{
    \textbf{(a) Representational Collapse} (Theorem \ref{thm:convergence-of-reps}). From top to bottom, we have a series of sequences given to Transformer architectures, each comprising repeated \texttt{1} tokens with a single \texttt{0} token at the end. 
    The color and proximity of the curved lines illustrate how these representations converge as sequence length increases. 
    \textbf{(b) Over-squashing} (Theorem \ref{thm:oversquashing}). Due to the architecture of decoder-only Transformers, tokens that are earlier in their input sequence will have significantly more paths through which their data can reach the representation used for next-token prediction, leading to `over-squashing'. This effect is depicted here for an early token (blue) and later token (red) in a five-token sequence. 
    }%
    \label{fig:main-fig}%
\vspace{-18pt}
\end{figure}

However, despite the impressive performance of Transformers, recent works have uncovered surprising failures that may point to fundamental issues in their architecture. For instance, Transformer-based LLMs seem to be particularly challenged by seemingly simple tasks requiring counting \cite{wei2022emergent} or copying elements along input sequences \cite{liu2024lost}. We find it important to study such failure cases, as these operations are fundamental building blocks of computation, and they are often necessary for solving reasoning tasks. A common strategy to assist LLMs in solving such tasks is to supply them with `tools' \cite{schick2024toolformer}. We argue that, while tool use will certainly help, it is still important to improve base model capabilities in this regard, because oftentimes, even producing accurate inputs to a tool may require complex reasoning operations. Specifically, often we need to \emph{copy} some part of the Transformer's input into a tool---if the base model struggles with robust copying, even this operation can be in peril.

Accordingly, we find it important to explain why decoder-only Transformers do not perform well when it comes to such problems---not just as an intellectual endeavour, but also to help guide further practical improvements. While many works have studied the computational capabilities of Transformers \cite{perez2021attention, merrill2023expresssive}, they often make assumptions which do not correspond to present practical limitations, such as infinite floating-point precision or `hard attention', making their conclusions less directly practically applicable. 

In this work, we take a different approach and study what information \emph{can} be contained in the representation of the last token at the last layer, as this is ultimately the information that will be used for next-token prediction --- the fundamental mechanism through which modern Transformer LLMs perform training and inference. In particular, we show that for certain distinct sequences, their last-token representations can become arbitrarily close to each other. This leads to a \emph{representational collapse}, exacerbated by the lower-precision floating point types typically used by modern LLM stacks. As a result Transformers incorrectly produce the same tokens on these sequence pairs --- see Figure \ref{fig:main-fig} (a).

Furthermore, we reveal that the computation graph employed by decoder-only Transformers, with its unidirectional causal mask, contributes to the observed representational collapse. This unidirectional flow of information, converging at the final token, is in fact likely to lead to a loss of information due 
to \emph{over-squashing}, an effect that is well studied in graph neural networks (GNNs) \cite{alon2021on,topping2021understanding,di2023over,barbero2024localityaware,giovanni2024how}, and related to vanishing gradients~\cite{HochSchm97,bengio94, herranzcelotti2024stabilizing}. We hope that this result will be of independent interest to the GNN community, as a practical application of over-squashing results at scale. Finally, we provide supporting empirical evidence that these issues are likely of practical interest, and propose simple solutions---directly stemming from our theoretical study---to help alleviate them.

In summary, our paper provides the following contributions:

\begin{itemize}[leftmargin=*]
    \item Theoretical analysis of decoder-only Transformer limitations: we formalise the concepts of `representational collapse' (Section \ref{sec:theory-convergence}) and `over-squashing' (Section \ref{sec:theory-oversquashing}) in the context of Transformer-based architectures.
    \item Impact of floating point precision: we explore how low floating-point precision exacerbate the identified theoretical issues, causing them to manifest even in relatively short input sequences.
    \item Empirical validation of theoretical analysis: our theoretical findings are supported by real-world experiments conducted on contemporary LLMs, demonstrating practical implications of the limitations we identified. 
\end{itemize}

\section{Background}
 In this work, we study a class of Transformers which we believe forms the basis for a large number of current LLMs. We let $\Qb, \Kb, \Vb \in \R^{n \times d}$ be the query, key, and value matrices respectively on $n$ tokens and $d$ dimensions. We denote with $\qb_i, \kb_i, \vb_i \in \R^d$ the $d$-dimensional query, key, and value vectors of the $i$-th token. We let $\pb_{ij} \in \R^{2e}$ be the $2e$-dimensional positional encoding information between tokens $i$ and $j$. We focus on the case in which the positional encodings are \emph{bounded}, which is the case for the large majority of positional encodings used in practice \cite{su2024roformer, vaswani2017attention}. The Transformer model we consider computes the values, for a single head, of the $i$-th token at the $\ell$-th Transformer layer $\vb^{(\ell)}_i$ as\footnote{Note that we rely on an abuse of notation. We ignore the linear projections used to compute the value $\vb_i^{(l)}$ from the output of layer below $l-1$. This will not change our derivations, but would otherwise make notations more cumbersome.} 
\begin{align*}
\label{eq:transformer}
\zb^{(\ell)}_i &= \sum_{j \leq i} \alpha_{ij}^{(\ell)} \norm_1^{(\ell)}\left(\vb^{(\ell)}_i\right) + \vb^{(\ell)}_i, \text{with } \alpha_{ij}^{(\ell)} = \frac{\exp\left(k\left(\qb_i^{(\ell)}, \kb_j^{(\ell)}, \pb_{ij}\right)\right)}{\sum_{w \leq i}\exp\left(k\left(\qb_i^{(\ell)}, \kb_w^{(\ell)}, \pb_{iw}\right)\right)} \\
\vb_i^{(\ell+1)} &= \psib^{(\ell)}\left(\norm_2^{(\ell)}\left(\zb_i^{(\ell)}\right)\right) + \zb_i^{(\ell)}
    \vspace{-15pt}
\end{align*}
for a function $k: \R^d \times \R^d \times \R^{2e} \to \R$ mapping queries, key, and positional encoding information to a scalar value, an MLP $\psib: \R^d \to \R^d$, and normalization functions at the $\ell$-th layer $\norm_1^{(\ell)}$ and $\norm_2^{(\ell)}$. This specific interleaving of components is often referred to as a Pre-LN Transformer \cite{xiong2020layer}. We can view the output of the $\ell$-th layer of a Transformer as a sequence of $d$-dimensional vectors $\vb^{(\ell)} = (\vb_1^{(\ell)}, \dots, \vb_n^{(\ell)})$. Importantly, due to the causal attention mechanism, the vector $\vb_j^{(\ell)}$, will only depend on elements $\vb_i^{(\ell-1)}$ for $i \leq j$. We can group the attention weights into an \emph{attention matrix} at the $\ell$-th layer which we define element-wise as ${\Lambdab}^{(\ell)}_{ij} = \alpha_{ij}^{(\ell)}$. This is a row-stochastic triangular matrix that can also be interpreted as a probabilistic directed graph. After the last transformer block a normalization is applied to the token representations:
$$\yb_i = \norm_3\left(\vb^{(L)}_i\right)$$
We note that the next-token prediction usually depends purely on $\yb_n$---the final representation of the last token.

\paragraph{Existing theory on Transformers.}
The theoretical representational capacity of Transformers has become a popular area of study, providing interesting results on what classes of problems they are able to model. For instance, it has been pointed out that Transformers are not Turing-complete, but one can apply modifications which make Transformers Turing-complete under certain assumptions \cite{dehghani2019universal}. Works have also shown that Transformers using `hard attention' which replaces softmax with one-hot vectors alongside the use of infinite precision makes Transformers Turing-complete \cite{perez2021attention}. This contrasts with our work, which focuses on the more standard setting of Transformers using soft-attention and finite precision, and shows the limitations imposed by it.

Works have also tried to study transformers capabilities through the lense of formal languages, such as Weiss et al.~\cite{weiss2021thinking}, which develops a computational model of what transformers can represent in an analogous way to how Recurrent Neural Networks are associated with finite automata, and then derive an implementable programming language that represents that model. Following that, Delétang et al.~\cite{deletang2023neural} place transformers within the Chomsky Hierarchy, showing that they are quite limited and cannot learn the decision problem for simple languages, which prompted authors to show that Transformer LLMs can perform substantially better if they generate a number of decoding tokens linear in the problem input size, through scratch-pad, Chain-of-Thought (CoT) or similar \cite{merrill2023expresssive}. Finally, Peng et al.~\cite{peng2024limitations} show that the Transformer block with finite precision is fundamentally limited in its ability to represent compositional functions and solve simple problems that require it. Our work similarly analyses the Transformer's inability to solve simple computational tasks, and proves that even with techniques like Chain-of-Thought that inability persists as it is inherent to the combination of architecture, next-token prediction, and limited floating point precision.
\begin{figure}%
    \centering
    \includegraphics[width=1.\textwidth]{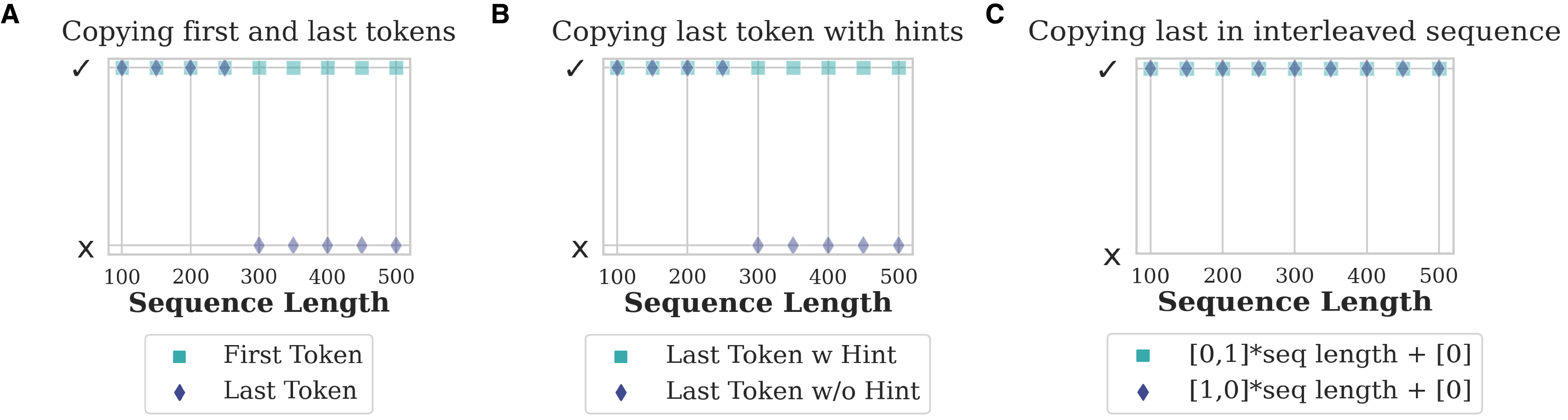}
    \caption{Results on simple copying tasks. (a). Gemini was prompted to predict the last token (diamond) of a sequences ‘1...10’ or the first token (square) of a sequence ‘01...1’. (b). Same as (a) but with hints (see ~\ref{sec:exps-counting} for details) (c). Same as (a) but the sequences have interleaved 0s and 1s. See \ref{app:exp-details} for extra details}%
    \label{fig:copying}%
\vspace{-15pt}
\end{figure}

\paragraph{Decay in attention mechanisms.}
Works have also studied the limitations of self-attention by showing that it can reach pathological states that limit what transformers are able to learn. For instance, it has been show how a great reduction in the attention entropy can lead to unstable training if occurring early, but even when occurring later in training it can still lead to significantly lower performance \cite{zhai2023stabilizing}. Further, it has been shown that specific tokens can strongly concentrate attention, leading to transformers being unable to learn to process simple languages, like PARITY and DYCK \cite{hahn2020theoretical}. Our work will similarly focus on showing how Transformers end-up effectively ignoring many tokens in their input which leads them to fail to solve simple computational problems, studying such a phenomenon by directly analysing the representational capacity. 

\paragraph{Over-squashing.}
Graph neural networks (GNNs) are neural networks designed to operate over graph structures. Importantly, Transformers, may be seen as types of attention-based GNNs operating over specific types of graphs. The difficulties of propagating information over a graph have been thoroughly analysed, with a notable phenomenon being that of \emph{over-squashing} \cite{alon2021on,topping2021understanding,di2023over,barbero2024localityaware}. Over-squashing refers to the fact that propagating information over certain graphs that exhbit `bottlenecks' is likely to induce a `squashing' of information. This can be made more precise by studying this effect via the notion of a \emph{commute time} \cite{giovanni2024how} --- the expected number of steps that a random walk takes to travel from a node to another node and back. Information travelling between nodes with higher commute time will be squashed more. 

A common way to measure over-squashing is by looking at how \emph{sensitive} the representation $\xb_v^{(L)}$ of a node $v$ after $L$ GNN layers is to the initial representation $\xb_u^{(0)}$ of another node $u$. In particular, the partial derivative $\partial \xb_v^{(L)} / \partial \xb_u^{(0)}$ may be shown to decay, especially for nodes with high commute times between them. Our work may be seen as acting as a bridge between the well-studied phenomenon of over-squashing in GNNs and the loss of information we analyse in decoder-only Transformers specifically for language tasks. Note that this type of derivation is typical in the study of \emph{vanishing gradients} for recurrent models as well~\cite{HochSchm97,bengio94,pascanu13,herranzcelotti2024stabilizing}.

\section{Motivating Examples}
\label{sec:exps}
This section presents a series of experiments focused on copying and counting tasks.  These experiments reveal surprising failure cases in modern decoder-only Transformer architectures, providing concrete evidence that motivates the theoretical analysis presented in the following sections.

We start by providing motivating examples that show surprisingly simple failure cases of frontier LLMs specifically on copying (Section \ref{sec:exps-copying}) and counting (Section \ref{sec:exps-counting}) tasks. By copying we specifically mean tasks that involve the `copying' or `recalling' of a single or multiple tokens from the prompt. Instead, by \emph{counting}, we mean the task of counting how many times a specific token appears in a sequence. We focus our evaluation on Gemini 1.5 \cite{team2023gemini} as our frontier LLM (referred as Gemini) and later analyse the internal representations of the open-sourced Gemma model \cite{team2024gemma}. The goal is to showcase intriguing failure cases which will motivate our signal propagation analysis. 

\subsection{Copying}
\label{sec:exps-copying}
In this Section, we present surprising results on simple \emph{copying} tasks. In particular, we focus on tasks that involve the copying of a \emph{single} token --- i.e. what is the token occurring at a particular position? The copy of a single token is in principle the most straightforward type of copying task, but still requires the LLM to accurately identify the token based on a prompt and to then propagate its information correctly.   

Importantly, we study cases in which the LLM is prompted to copy tokens either at the \emph{start} or at the \emph{end} of a sequence. We avoid tasks that involve the copy of tokens at the `$n$-th' position as most frontier LLMs do not have absolute positional information, making it very challenging for them to solve tasks that require absolute position. We focus on tasks that involve sequences of `zeros' and `ones' growing in length with specific patterns.

In Figure \ref{fig:copying} (a), we prompt Gemini to copy the last element of a sequence `$1\dots10$' or the first element of a sequence `$01\dots1$'. The answer for both is zero, but we progressively grow the number of ones. We observe how the task seems considerably easier when asked to return the first rather than the last element. Surprisingly, already at a sequence length of only $300$ elements, Gemini incorrectly starts to output `one' when trying to copy the last element. In Figure \ref{fig:copying} (b), we show that providing hints in the form of: `` *Hint* It's not necessarily a 1, check carefully'', helps significantly with the performance. Finally, in Figure \ref{fig:copying} (c), we show that replacing the constant sequence of ones with alternating ones and zeros seems to also help. We refer to the Appendix (Section \ref{app:exp-details}) for further details on the experiments.

\begin{figure}
  \begin{center}
    \includegraphics[width=1\textwidth]{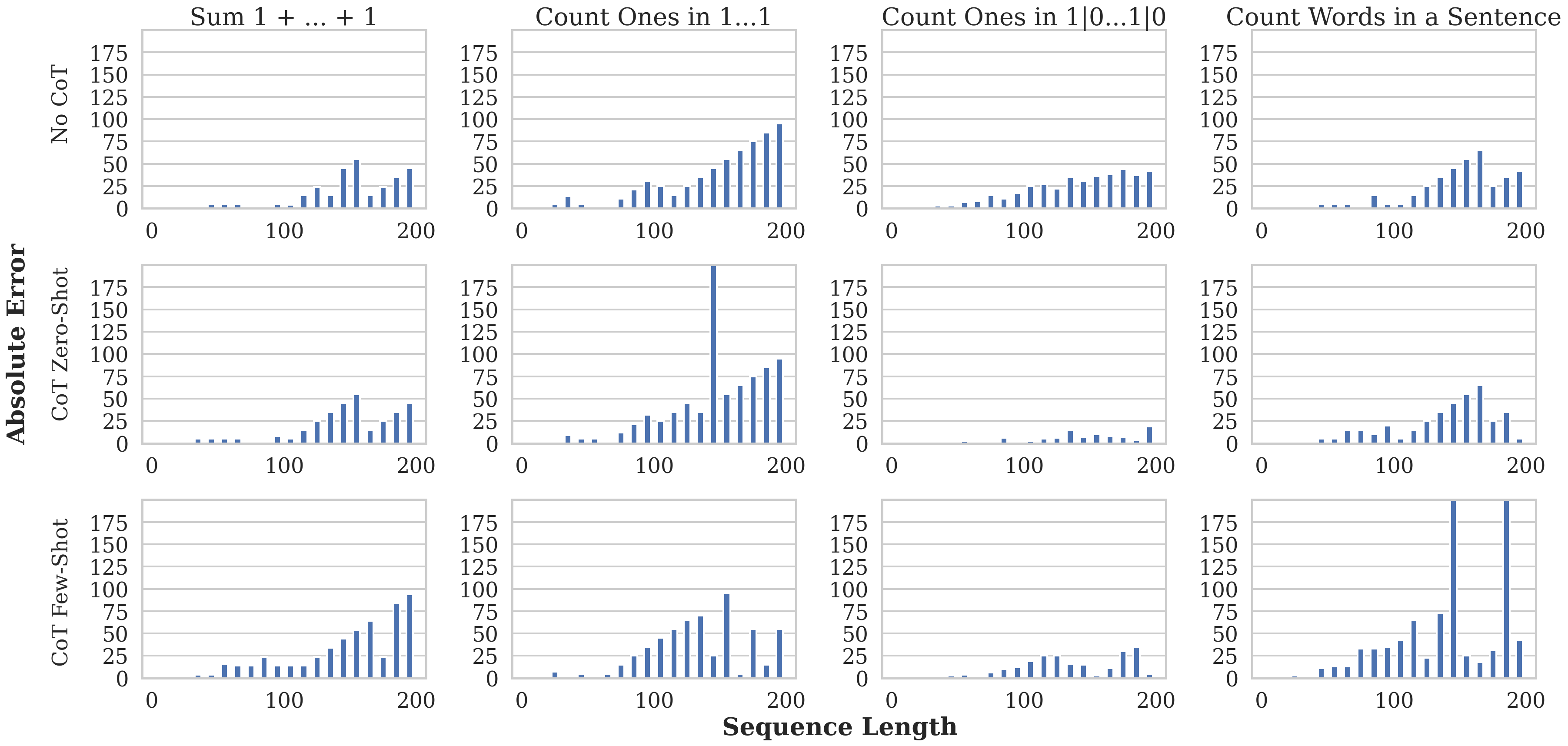}
  \end{center}
  \caption{Gemini 1.5 being prompted to sum $1+\dots+1$ (Column 1), Count the number of ones in a sequence of 1s (Column 2), Count the number of ones in a sequence of ones and zeroes (the sequence is a Bernoulli sequence with probability of sampling a one being 0.7) (Column 3), and to counter the number of times a word appears in a sentence (Column 4).}
  \label{fig:counting-experiments}
  \vspace{-15pt}
\end{figure}

These three motivating experiments seem to point towards a type of vanishing of information, caused by the growing number of ones dominating the sequence. Interestingly, such a vanishing of information effect (a) seems to depend on the position in the sequence, (b) seems to be affected by the prompting, and (c) by the items that make up the sequence. We will later argue how all three of such observations can explained by our theoretical analysis.

\subsection{Counting}
\label{sec:exps-counting}
We now turn our attention to \emph{counting} problems, i.e. tasks of the form --- given a specific sequence, how many times does a particular token appear? Such problems are related to copying in the sense that they also require careful consideration of individual tokens in the sequence as ignoring even a single token may potentially lead to an incorrect output. 

We consider four different tasks: (i) Summing $1 + \dots + 1$, (ii) Counting the number of ones in a sequence of ones, (iii) Counting the number of ones in a sequence of ones and zeros, with ones being sampled with $70\%$ probability, and (iv) Counting the number of times a specific word appears in a sentence. We consider predictions of an LLM which (1) Is instructed to only output the answer, (2) Is prompted to break down the problem (CoT-no-shot), and (3) Is prompted to break down the problem with few-shot in-context examples (CoT-few-shot). We refer to the Appendix (Section \ref{app:exp-details}) for a more detailed description of the tasks. 

\begin{figure}
  \begin{center}
    \includegraphics[width=1\textwidth]{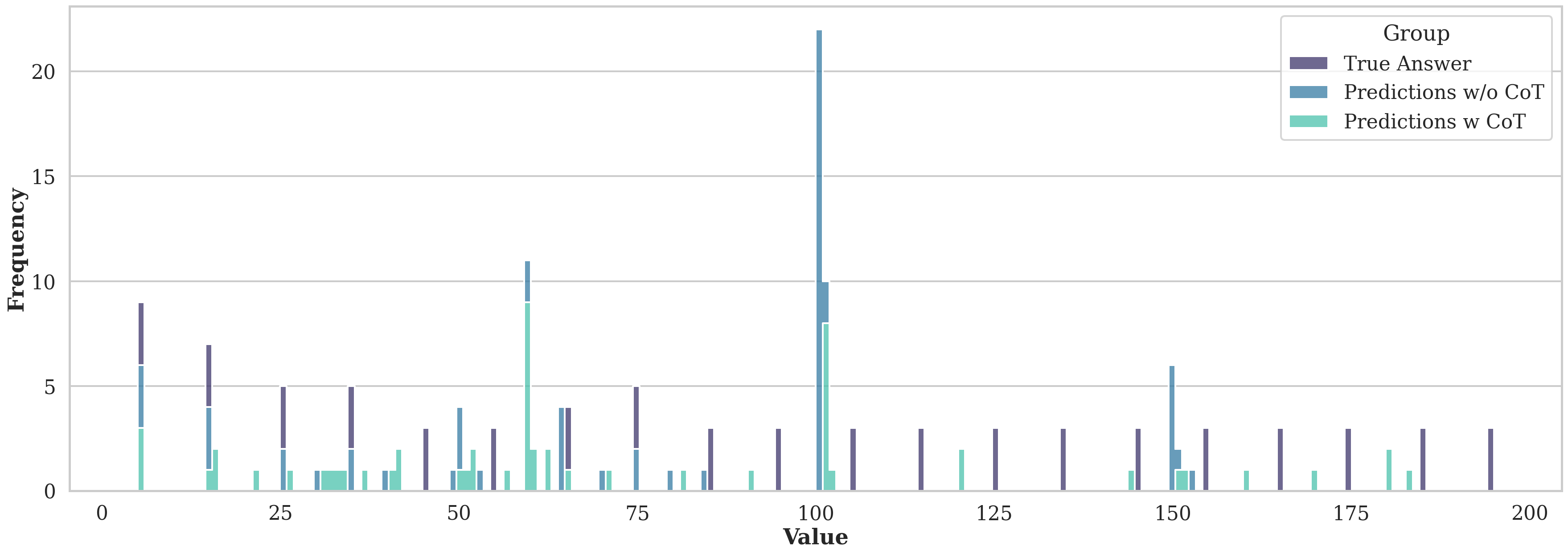}
  \end{center}
  \caption{Frequency of different outputted values for Gemini 1.5 for the counting tasks. The large density at $100$ suggests that Gemini is likely not counting, but instead possibly performing some crude form of subitising.}
  \label{fig:frequency-histogram}
  \vspace{-15pt}
\end{figure}

Results are presented in Figure \ref{fig:counting-experiments}. It is clear that the performance rapidly deteriorates with the sequence length. It is also interesting to see that the error seems to increase with the sequence very rapidly. For instance in task (i), the LLM is quite likely to predict the value of `100' once the sequence reaches a size around or larger than $100$. Such an observation provides motivating evidence for the argument that Transformers may not be in fact mechanically counting but rather perform a type of crude subitising. This explains why arguably `common' numbers such as $100$ are much more likely to be outputted by the LLM and why in tasks such as (i) and (ii) the values near $100$ have relatively lower error. This does not happen in task (iii) as the response should actually be around $70\%$ of the sequence length due to the sequence sampling procedure, explaining why the absolute error  actually seems to increase around a sequence length of $100$. Figure \ref{fig:frequency-histogram} further showcases this issue, more clearly showing how $100$ is by far the most common response.

\section{Representational Collapse}
\label{sec:theory-convergence}
We start our theoretical analysis by showcasing a type of loss of information which we call \emph{representational collapse}. More precisely, we show that under certain conditions, we can find distinct sequences such that their final representations of the last token at the last layer \emph{become arbitrarily close} as the sequence length increases. As Transformer models operate over finite machine precision, this points to a fundamental representational incapacity of Transformers to distinguish certain prompts if the sequence is long enough.

The key intuition is that if two sequences are similar everywhere except at the last token, as the sequences get larger, their final representations will become closer and closer until they reach a critical point which is below floating point precision. In other words, solving certain tasks would require infinite floating point precision. We will later show how this phenomenon is not only theoretical, but also occurs in practice on sequences of reasonable length. In the Appendix (Section \ref{sec:theory-convergence}), we relate representational collapse to the $L_1$ distance -- or total variation -- between the softmax distributions of the two sequences. We start by presenting a result that shows that the $L_1$ difference tends to $0$ as the sequence length grows, under some assumption on the sequences. We point to the Appendix (Lemma \ref{app:lemma:variation-decay}) for the complete statement. 

\begin{lemma}[Informal]
\label{lemma:variation-decay}
Consider two sequences $\xb, \xb^* \in \R^n$ such that $\lim_{n \to \infty} \left \lvert \xb_n - \xb^*_n \right \rvert = 0$. Then, the $L_1$ difference of their softmax tends to $0$. 
\end{lemma}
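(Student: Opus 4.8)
The plan is to write the softmaxes out explicitly and exploit the structural fact that is implicit in the informal statement (and made precise in the full statement in the appendix): the two sequences are ``similar everywhere except at the last token'', i.e.\ they agree on their first $n-1$ coordinates, so that the last coordinate is the \emph{only} source of discrepancy. Put $Z = \sum_{j=1}^n e^{\xb_j}$ and $Z^* = \sum_{j=1}^n e^{\xb^*_j}$, and let $S = \sum_{j<n} e^{\xb_j} = \sum_{j<n} e^{\xb^*_j}$ denote the shared ``prefix mass'', so that $Z = S + e^{\xb_n}$ and $Z^* = S + e^{\xb^*_n}$.

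First I would split $\lVert \softmax(\xb) - \softmax(\xb^*)\rVert_1$ into the sum over the indices $i < n$ and the single index $i = n$. On the indices $i<n$ the numerators $e^{\xb_i}$ match, so each term equals $e^{\xb_i}\,\lvert Z^{-1} - (Z^*)^{-1}\rvert$ and the sum collapses to $S\,\lvert Z - Z^*\rvert / (Z Z^*)$; for $i = n$, clearing denominators and using $Z - Z^* = e^{\xb_n} - e^{\xb^*_n}$ yields the same quantity. Hence
\[
\bigl\lVert \softmax(\xb) - \softmax(\xb^*) \bigr\rVert_1 = \frac{2 S \,\bigl\lvert e^{\xb_n} - e^{\xb^*_n} \bigr\rvert}{Z\, Z^*}.
\]
Next I would bound this: since $Z \ge S$ and $Z^* \ge e^{\xb^*_n}$ (and, symmetrically, $Z \ge e^{\xb_n}$, $Z^* \ge S$), the prefix mass $S$ cancels, leaving the $n$-free estimate $2\,\bigl\lvert e^{\xb_n - \xb^*_n} - 1\bigr\rvert$ (and likewise $2\,\bigl\lvert e^{\xb^*_n - \xb_n} - 1\bigr\rvert$). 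Since $\lvert \xb_n - \xb^*_n\rvert \to 0$ and $t \mapsto e^t$ is continuous at $0$, this bound vanishes, which proves the claim.

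As an alternative I would record a slightly more robust route that does not require the prefixes to match exactly: with $\delta = \lVert \xb - \xb^*\rVert_\infty$ one has $Z^*/Z \in [e^{-\delta}, e^{\delta}]$, hence $\softmax(\xb)_i/\softmax(\xb^*)_i = e^{\xb_i - \xb^*_i} \cdot Z^*/Z \in [e^{-2\delta}, e^{2\delta}]$ for every $i$; multiplying through by $\softmax(\xb^*)_i$ and summing gives $\lVert \softmax(\xb) - \softmax(\xb^*)\rVert_1 \le e^{2\delta} - 1$, and under the hypothesis $\delta = \lvert \xb_n - \xb^*_n\rvert \to 0$. Either way, none of this is computationally hard; the one point that genuinely needs care — and is the reason the lemma is true at all — is that the shared prefix mass $S$ diverges as $n \to \infty$ yet drops out of the final bound, so the estimate is \emph{uniform in $n$} and controlled purely by $\lvert \xb_n - \xb^*_n\rvert$. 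The ``hard part'', such as it is, is therefore just choosing the decomposition so that this cancellation is manifest and checking that no latent $n$-dependence survives through $Z$ or $Z^*$.
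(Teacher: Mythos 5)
Your first route proves a strictly weaker statement than the paper's. You read the hypothesis as ``the two sequences agree on their first $n-1$ coordinates and differ only in the last,'' but Lemma~\ref{app:lemma:variation-decay} (and the way it is used in the theorem that consumes it) does \emph{not} assume matching prefixes: the formal statement only requires that the $i$-th entries be bounded and that $\lvert\xb_i - \xb^*_i\rvert \to 0$, so the two sequences may disagree by $O(1)$ amounts on any fixed finite prefix. Your cancellation $\lVert \softmax(\xb)-\softmax(\xb^*)\rVert_1 = 2S\,\lvert e^{\xb_n}-e^{\xb^*_n}\rvert/(ZZ^*)$ relies structurally on the numerators $e^{\xb_i}$ and $e^{\xb^*_i}$ coinciding for every $i<n$; that identity simply fails once the prefixes differ. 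The paper's proof works around exactly this: it splits $\sum_i \lvert e^{\xb_i}-e^{\xb^*_i}\rvert$ at an index $n_0$ beyond which $\lvert 1-e^{\xb^*_i-\xb_i}\rvert\le\epsilon/2$, controls the tail by $\tfrac{\epsilon}{2}Z$, and then uses boundedness to argue that $Z$ diverges so that the \emph{fixed} contribution from $i<n_0$ is eventually also dominated by $\tfrac{\epsilon}{2}Z$. This gives $\lvert Z-Z^*\rvert \le \epsilon\min(Z,Z^*)$ and, by the same splitting applied to the $L_1$ sum, $\delta(\yb,\yb^*)\le 2\epsilon$. The boundedness assumption you discard is therefore doing real work in their argument: it is what makes $Z\to\infty$ so the prefix error is absorbed.

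Your alternative route (control $Z^*/Z\in[e^{-\delta},e^{\delta}]$ with $\delta=\lVert\xb-\xb^*\rVert_\infty$ and conclude $\lVert\softmax(\xb)-\softmax(\xb^*)\rVert_1 \le e^{2\delta}-1$) is a clean and correct inequality, but it again substitutes a different hypothesis: it needs the \emph{sup-norm} of the difference to vanish, which excludes exactly the situation the paper's proof is designed to handle, where a fixed early coordinate may differ by a constant forever. Neither of your hypotheses implies the paper's, and the paper's does not imply yours (except that your first one is a genuine special case of theirs).

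That said, the two things your first route \emph{does} buy are worth recording: under the matching-prefix hypothesis the bound $2\lvert e^{\xb_n-\xb^*_n}-1\rvert$ is uniform in $n$ (the shared mass $S$ cancels exactly, so there is no ``for $n$ large enough''), and boundedness is not needed at all. The paper's bound holds only eventually and requires boundedness. So your argument is sharper on a smaller domain. If you want to repair the proposal to match the paper's generality, the cleanest fix is to reintroduce the triangle-inequality split of $\lvert Z^* e^{\xb_i}-Z e^{\xb^*_i}\rvert$ into $\lvert Z^*-Z\rvert e^{\xb_i} + Z\lvert e^{\xb_i}-e^{\xb^*_i}\rvert$ and then run the $n_0$-splitting argument to bound $\sum_i \lvert e^{\xb_i}-e^{\xb^*_i}\rvert$ against $Z$, which is exactly what the paper does.
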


We now show, using Lemma \ref{lemma:variation-decay}, that we can find distinct sequences that will have arbitrarily close final representations. In particular, as language models often operate in low floating regimes, i.e. \texttt{bf16}, this can practically become catastrophic. The result is summarised in Theorem \ref{thm:convergence-of-reps}, which describes what we call representational collapse in this work. The complete statement is reported in the Appendix (Theorem \ref{app:thm:convergence}). 


\begin{theorem}[Representational Collapse -- informal]
\label{thm:convergence-of-reps}
Let $\vb^{(0)} \in \R^{n \times d}$ be a sequence and  $\vb^{*(0)} \in \R^{(n+1) \times d}$ be another sequence equal to $\vb^{(0)}$ with the last token of $\vb^{(0)}$ repeated. Assume that the positional encoding information decays to $0$ with the distance. Then, their representations become arbitrarily close as $n$ increases.
\end{theorem}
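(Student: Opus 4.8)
The plan is to propagate the closeness of the two sequences layer by layer, showing inductively that at every layer $\ell$ the representations of the last token of $\vb^{(0)}$ and of $\vb^{*(0)}$ differ by a quantity that vanishes as $n \to \infty$. The two sequences agree on the first $n$ tokens (call these the ``shared'' tokens), while $\vb^{*(0)}$ has one extra copy of the last token. Fix attention on the final token of each sequence: for $\vb^{(0)}$ it is token $n$, for $\vb^{*(0)}$ it is token $n+1$, and crucially both attend to an identical multiset of shared key/value vectors plus (in the starred case) one extra key/value equal to that of token $n$. The first step is to compare the attention coefficients $\alpha^{(\ell)}_{n,\cdot}$ and $\alpha^{*(\ell)}_{n+1,\cdot}$: since the queries are (inductively) close, the keys over the shared tokens coincide at layer $0$ and stay close by induction, and the positional encodings $\pb_{n,j}$ versus $\pb_{n+1,j}$ differ only by a shift whose effect decays with distance by the RoPE-decay assumption, Lemma \ref{lemma:variation-decay} gives that the $L_1$ distance between the two softmax distributions (after accounting for the single duplicated column) tends to $0$.

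Second, I would push this through the value aggregation $\zb^{(\ell)}_i = \sum_{j\le i}\alpha^{(\ell)}_{ij}\,\norm_1^{(\ell)}(\vb^{(\ell)}_i) + \vb^{(\ell)}_i$: because the positional encodings are bounded and the normalisation maps have bounded output (e.g. $\norm_1$ projects onto a sphere), the aggregated vector is a convex combination of bounded vectors, so an $L_1$-small perturbation of the mixing weights plus an inductively small perturbation of the mixed vectors yields an $O(\text{small})$ perturbation of $\zb^{(\ell)}_n$ versus $\zb^{*(\ell)}_{n+1}$; the residual term $\vb^{(\ell)}_i$ is handled directly by the inductive hypothesis. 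Then $\vb^{(\ell+1)}_i = \psib^{(\ell)}(\norm_2^{(\ell)}(\zb^{(\ell)}_i)) + \zb^{(\ell)}_i$ preserves closeness because $\psib$ and $\norm_2$ are (locally) Lipschitz on the bounded region where all representations live. Iterating over the finitely many layers $\ell = 0, \dots, L$, and applying the final normalisation $\norm_3$, gives $\lVert \yb_n - \yb^*_{n+1} \rVert \to 0$ as $n \to \infty$, which is the claimed representational collapse.

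The main obstacle is the attention-coefficient comparison in the first step, because the two final tokens sit at positions $n$ and $n+1$ and therefore ``see'' the shared tokens through positional encodings that are uniformly shifted by one. One must argue that this global shift does not prevent the softmax distributions from aligning: this is exactly where the decay-with-distance hypothesis on the positional information is essential, since it ensures $\pb_{n+1,j} - \pb_{n,j} \to 0$ for the tokens $j$ that carry non-negligible attention mass (those near the end), while the far-away tokens receive vanishing mass anyway. A secondary subtlety is bookkeeping the extra duplicated column in the starred softmax: since its key/value equals that of token $n$, it contributes the same value vector, so the two aggregates still match up to the $L_1$ discrepancy of the induced distributions on the value vectors — this needs to be stated carefully but is not deep. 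Everything else is uniform boundedness plus Lipschitz continuity, propagated through a fixed finite depth.
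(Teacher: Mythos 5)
Your strategy matches the paper's: bound the difference of the last-token representations at layer $0$ by the total variation of the two attention distributions plus the weight on the extra token (this is what Lemma~\ref{app:lemma:variation-decay} and Lemma~\ref{app:lemma:softmax} are used for), then propagate through the remaining finitely many layers by Lipschitz continuity of $\psib$ and the normalisations, using the residual to handle the skip term. One intuition in your write-up is inverted, and it is the one doing the real work. You argue that the decay hypothesis ensures $\pb_{n+1,j} - \pb_{n,j} \to 0$ for tokens $j$ near the end (those carrying most attention mass), while far-away tokens receive vanishing mass. In fact the opposite holds: for $j$ near the end, $n-j$ and $n+1-j$ are small fixed integers, so $\pb_{n,j}$ and $\pb_{n+1,j}$ are fixed, distinct, non-vanishing quantities --- the positional shift does \emph{not} disappear there. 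Those near-end tokens are controlled instead because there are only boundedly many of them, so their exponentiated logits become a vanishing fraction of the growing partition function $Z$ as $n \to \infty$ (this is the finite head of the sum in the proof of Lemma~\ref{app:lemma:variation-decay}). The decay hypothesis is needed precisely for the far-from-the-end tokens: with $j$ fixed and $n \to \infty$, both $\pb_{n,j}$ and $\pb_{n+1,j}$ tend to $0$, so the unstarred and starred logits converge pointwise on the tail, which is the hypothesis the total-variation lemma requires. A smaller point: causal masking means tokens at positions $\le n$ in the starred sequence never attend to the repeated token, so the shared keys, queries, and values stay \emph{exactly} equal to the unstarred ones at every layer, not merely ``close by induction''; this is what justifies comparing only the two final tokens throughout the depth and keeps the induction clean.
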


Theorem \ref{thm:convergence-of-reps} shows that it becomes increasingly challenging for a Transformer to distinguish two sequences that only differ via a repeated last token. We note that the repetition of the last token is a technical consideration to show this direct representational collapse. As we will later show in Section \ref{thm:oversquashing}, it is particularly problematic \emph{in general} to depend on the last token due to a type of topological `squashing' present in decoder-only Transformers. 

\paragraph{Measuring representational collapse.} We report experiments showcasing representational collapse by measuring the internal representations of Gemma 7B \cite{team2024gemma}. For two sequences $\vb^{(0)}$ and $\vb^{*(0)}$ we report their difference in representation at the last layer $\left \lVert \vb^{(L)} - \vb^{*(L)}\right \rVert_{\infty}$ averaged out over each head, alongside the minimum and maximum over each head. Figure \ref{fig:convergence} shows the collapse occuring on (a) prompting the model to count the number of ones in a sequence of ones, with one having an additional one, and (b) prompting the model to count the number of ones for a sequences with digits sampled uniformly ending with either a single one or two ones. The repeated digits seem to make the collapse occur much sooner with a sequence length of around $50$ being near machine precision, while varying the digits seems to delay such a collapse, but a downward trend is maintained with respect to the sequence length.

\begin{figure}%
    \centering
    \subfloat[\centering ``How many ones are in the following sequences?'' Followed by a sequence of ones.]{{\includegraphics[width=0.45\textwidth]{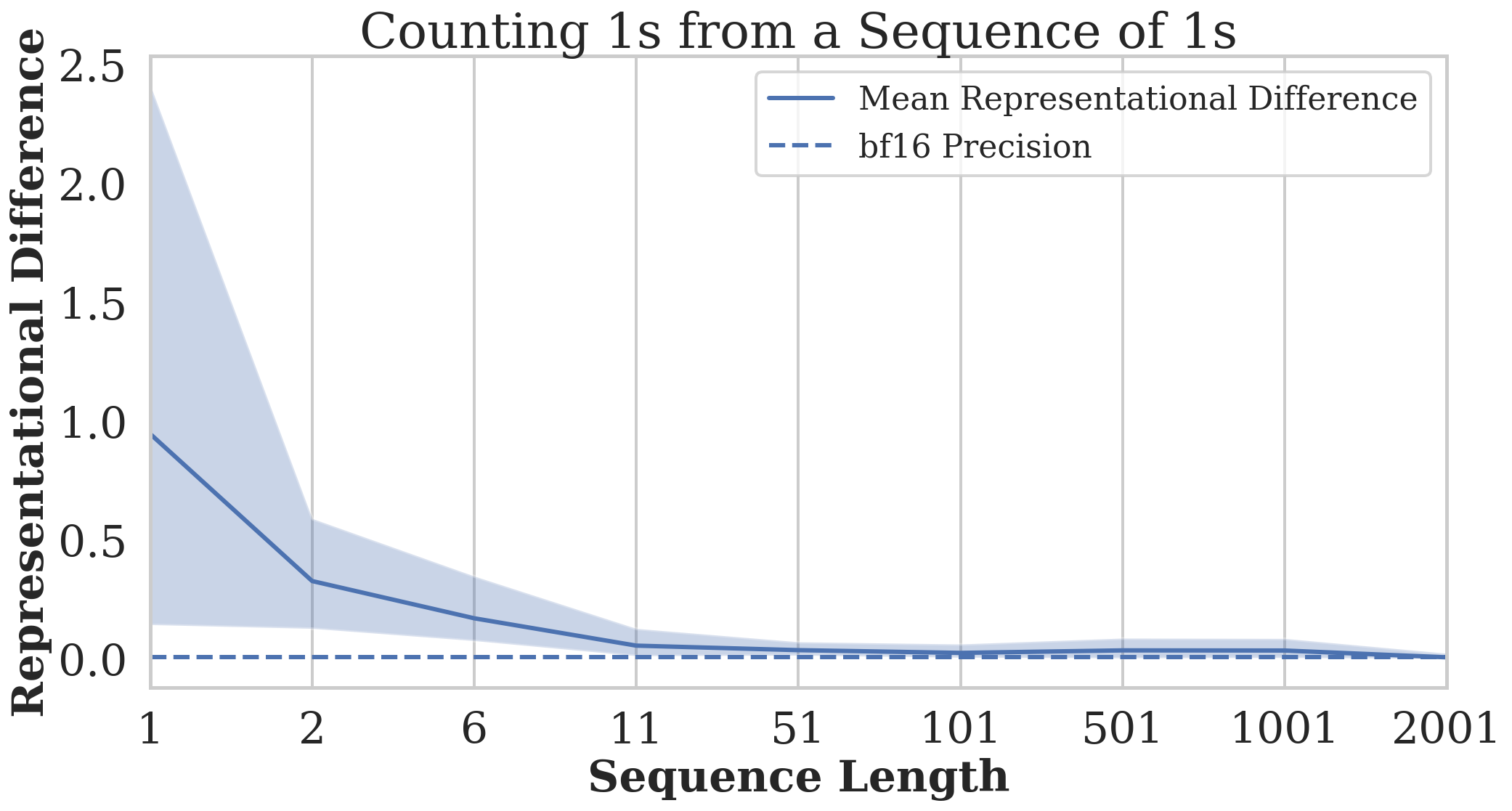} }}%
    \qquad
    \subfloat[\centering ``How many ones are in the following sequences?'' Followed by sampled  digits.]{{\includegraphics[width=0.45\textwidth]{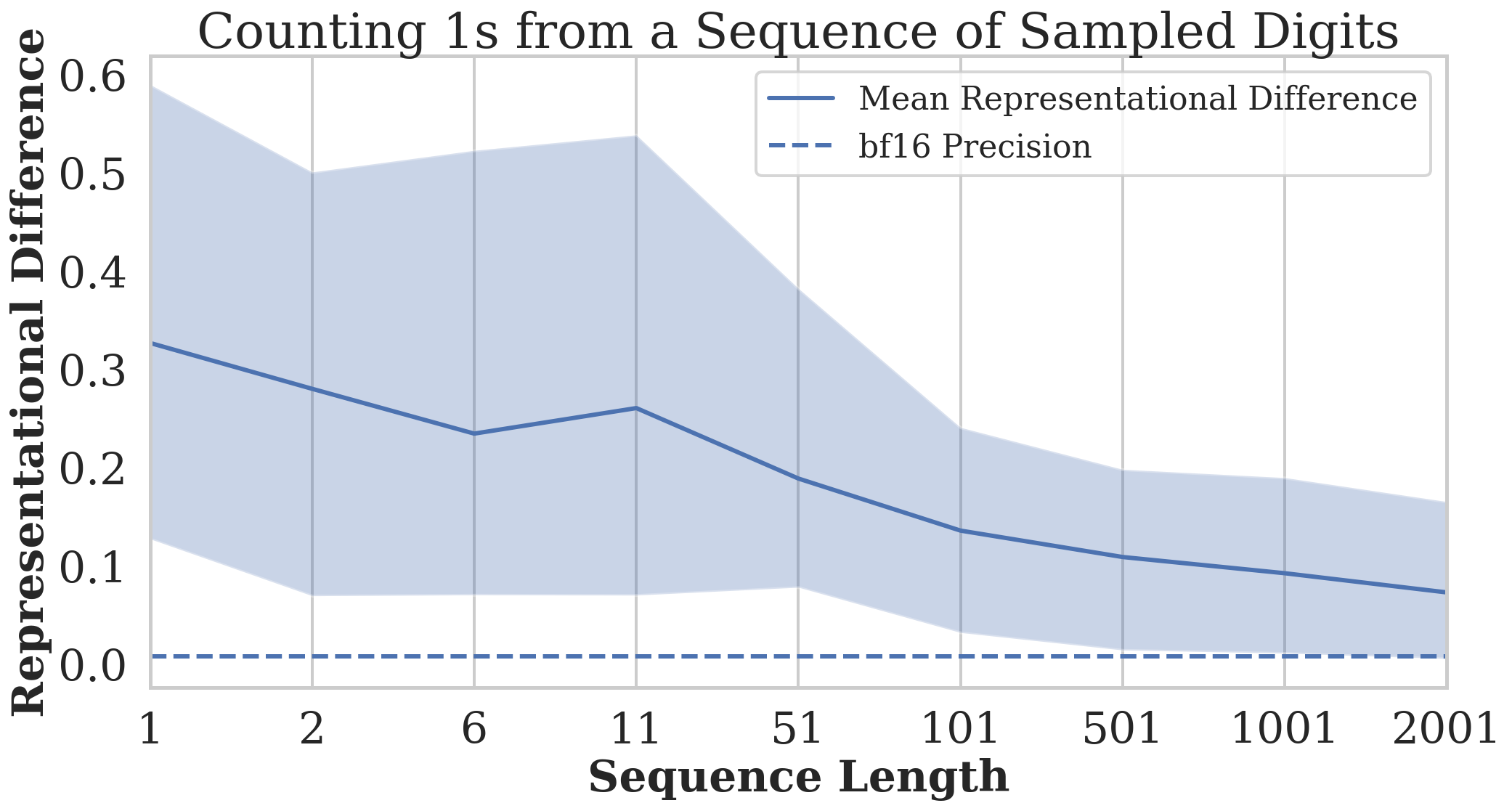} }}%

    \subfloat[\centering ``Can you copy the following number?'' Followed by a sequence of ones.]{{\includegraphics[width=0.45\textwidth]{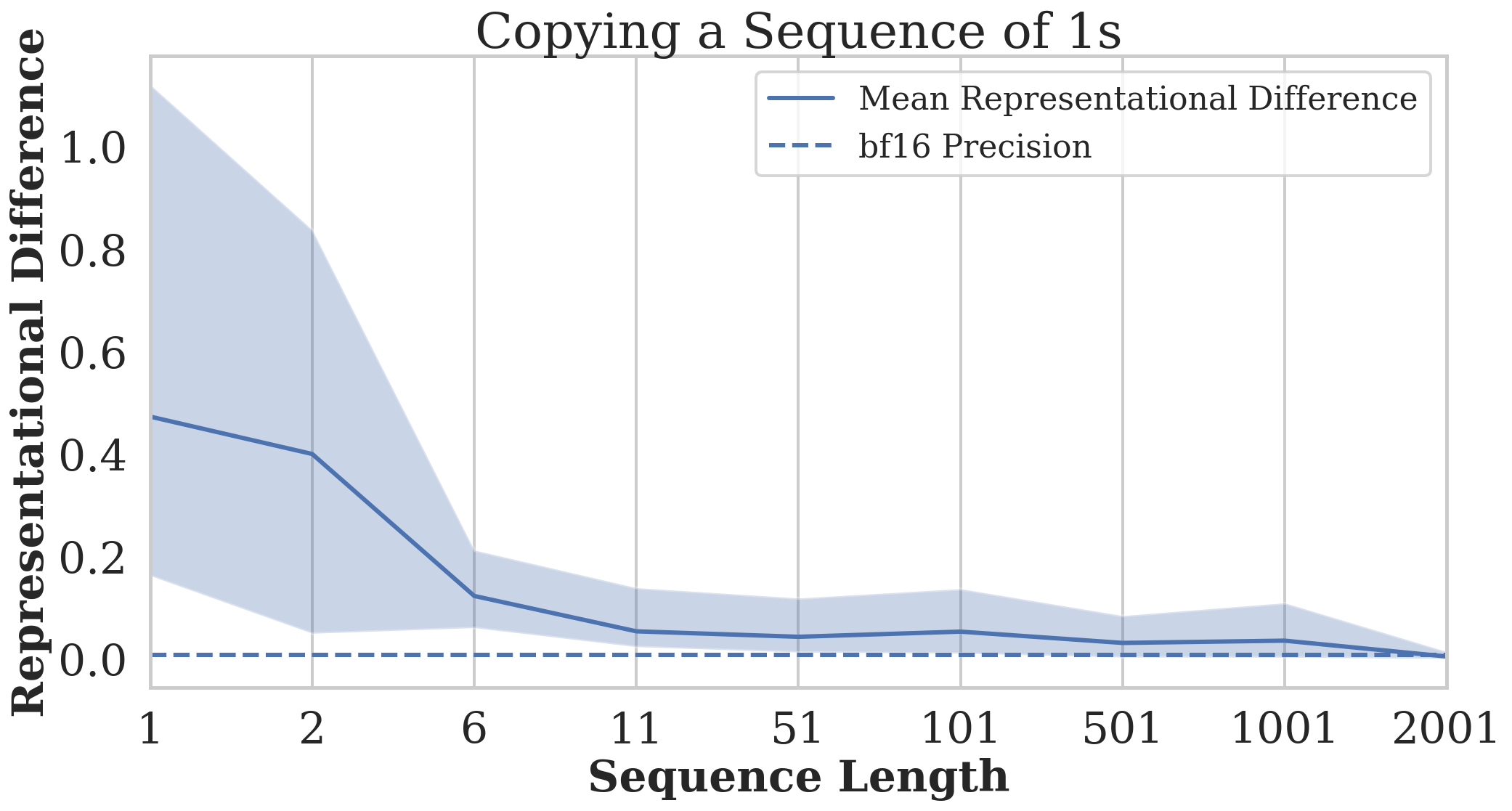} }}%
    \qquad
    \subfloat[\centering ``Can you copy the following number?'' Followed by a sequence of ones with commas.]{{\includegraphics[width=0.45\textwidth]{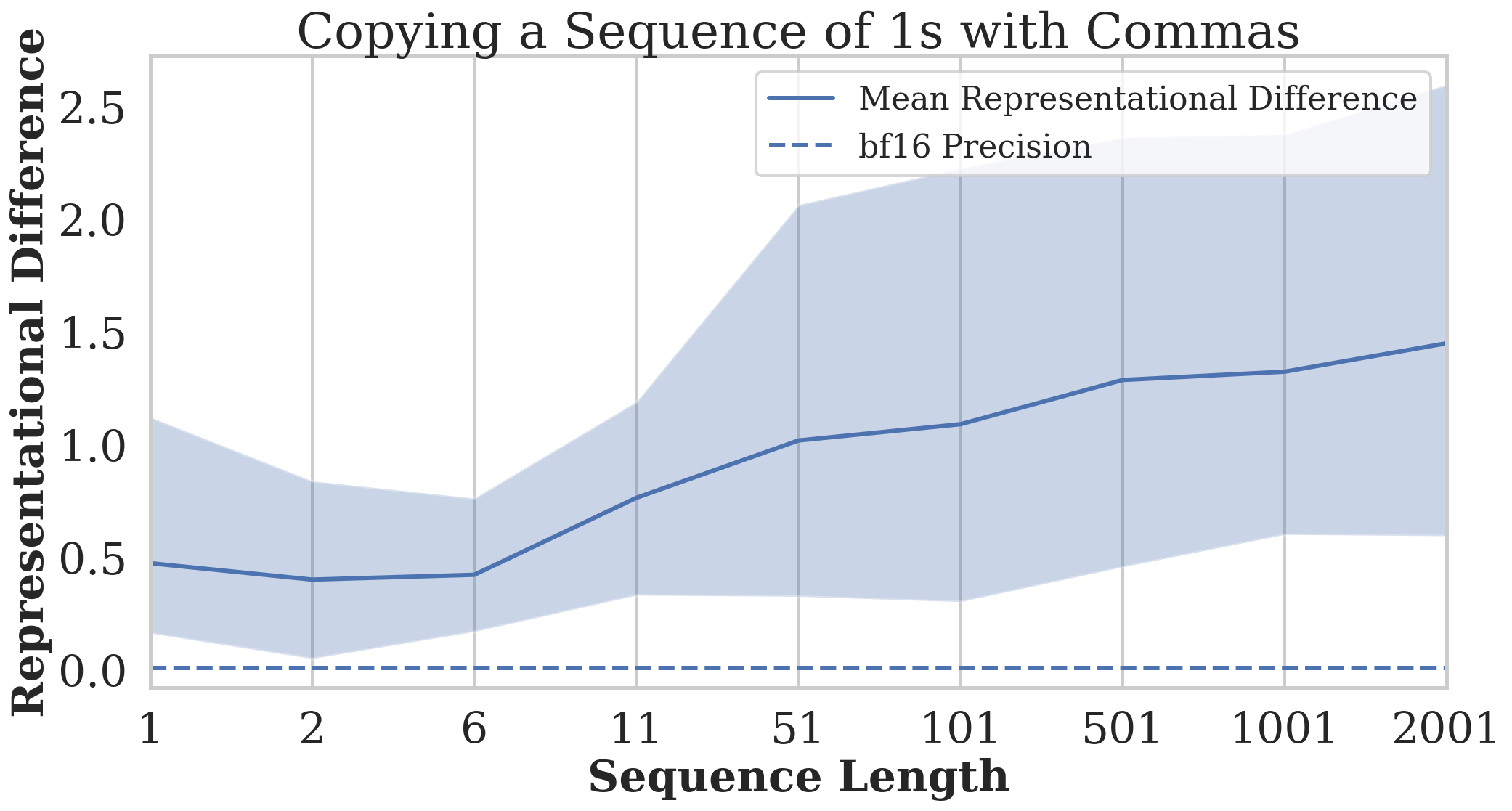} }}%
    \caption{Representational collapse for counting (a, b) and copying (c, d) tasks.}%
    
    \label{fig:convergence}%
    \vspace{-10pt}
\end{figure}

\paragraph{Quantisation and Tokenisation.} A common technique used to speedup the inference of an LLM is that of \emph{quantisation}, a process that constructs an approximate version of an LLM that operates over lower precision datatypes. This helps drastically improve the inference speed of LLMs as modern accelerators produce significantly more FLOPs over lower precision datatypes. Of course quantisation usually comes at a cost. Our theoretical analysis points towards a potentially catastrophic loss in representation due to quantisation. In particular, a lower machine precision will mean that the convergence of representations in Theorem \ref{thm:convergence-of-reps} will occur much sooner, and consequently the LLM will not be able to distinguish even shorter sequences. 

In practice, the direct application of theoretical results is made more complicated due to tokenisation. In particular, a sequence of repeated tokens `11111' for instance may not be necessarily tokenised into $5$ distinct `1' tokens. In principle, this should help alleviate the direct collapse of the representations. Tokenisation in general makes it more of a challenge to study such phenomena as it adds an additional layer of complexity to the experimental analysis. In our experiments, we took tokenisation into consideration and attempted to mitigate its effects.

\paragraph{A simple solution to representational collapse.}

An important consequence of Theorem \ref{thm:convergence-of-reps} is that \emph{it is challenging for a Transformer to deal with a long sequence of repeated tokens}. A practical solution is to this issue is to introduce additional tokens throughout the sequence to help keep the representations distant. We provide direct evidence of this in Figure \ref{fig:convergence} (c,d), where we prompt the model on a simple copying task of a long string of ones. While the representations collapse for the sequence of ones (c), adding commas every third digit (d) helps to keep the representations well-separated.

\section{Over-squashing in Language Tasks}
\label{sec:theory-oversquashing}
In this Section, we study a more general phenomenon related to representational collapse---over-squashing. In particular, we are interested in analysing how information from the input sequence affects the information contained within the representation of the last token in the final layer---the representation ultimately used for next-token prediction. For this reason, we study the quantity $\partial \yb_n / \partial \vb_i^{(0)}$ which measures how sensitive is the final token to an input token at position $i$. 

In graph neural network theory, the decay of such a partial derivative is often associated with the `squashing' of information, leading to the phenomenon of \emph{over-squashing} \cite{topping2021understanding,di2023over,giovanni2024how}, a problem related to the well-known vanishing gradients problem in RNNs~\cite{HochSchm97,bengio94,giovanni2024how}. The over-squashing analysis we carry out in this work is particularly challenging due to the flexible nature of the attention mechanism and the many components that are part of decoder-only Transformers. Consequently, we make two simplifying assumptions in our analysis: (i) We summarise the effect of layer normalisation via a constant $\beta_i$ for the $i$-th layer norm component, and (ii) the attention weights are treated as independent of the input. Such simplifications are not strictly necessary for our analysis, but they greatly simplify the resulting bound we derive and do not detract from the two key takeaways: \textbf{(1) the sensitivity to an input token depends on its position in the sequence and (2) the sensitivity to an input token depends on the attention weights}. The result is summarised in Theorem \ref{thm:oversquashing}. The full statement is reported in the Appendix (Theorem \ref{app:thm:oversquashing}). 


\begin{theorem}[Over-squashing in Transformers]
\label{thm:oversquashing}
Consider an input sequence $\vb_1^{(0)}, \dots, \vb_n^{(0)}$. Let $C > 0$ be some constant and $\bar{\alpha}_{i,j}^{(\ell)} =  \frac{\alpha_{i,j}^{(\ell)}}{\beta_2}+\delta_{i,j}$, then:
\begin{equation}
    \left \lVert\frac{\partial \yb_n}{\partial \vb_i^{(0)}} \right \rVert \leq C \sum_{k_1 \geq i} \dots \sum_{k_L \geq k_{L-1}} \bar{\alpha}^{(L-1)}_{n,k_L} \prod_{\ell=2}^{L-1} \bar{\alpha}^{(\ell-1)}_{k_\ell, k_{\ell-1}}  \bar{\alpha}^{(0)}_{k_1, i}
\end{equation}

\end{theorem}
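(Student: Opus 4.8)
The plan is to unroll the layer recursion for $\yb_n$ and bound the resulting product of Jacobians term by term, using the two simplifying assumptions (constant layer-norm factors, input-independent attention weights) to linearise the computation. First I would write $\yb_n = \norm_3(\vb_n^{(L)})$ and, using the Pre-LN update equations, express $\partial \yb_n / \partial \vb_i^{(0)}$ via the chain rule as a sum over all ``paths'' $i = k_0 \to k_1 \to \dots \to k_L = n$ through the causal computation graph, where at each layer $\ell$ the index can only increase ($k_\ell \geq k_{\ell-1}$) because $\vb_j^{(\ell)}$ depends only on $\vb_w^{(\ell-1)}$ for $w \le j$. The contribution of a single edge $k_{\ell-1} \to k_\ell$ in layer $\ell$ comes from differentiating $\zb_{k_\ell}^{(\ell-1)} = \sum_{j \le k_\ell} \alpha_{k_\ell, j}^{(\ell-1)} \norm_1(\vb_j^{(\ell-1)}) + \vb_{k_\ell}^{(\ell-1)}$ and then the MLP-plus-residual step $\vb^{(\ell)} = \psib(\norm_2(\zb^{(\ell)})) + \zb^{(\ell)}$.

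The key steps, in order: (1) bound the Jacobian of each $\norm_i$ by the corresponding constant $\beta_i$ (this is exactly assumption (i) — the layer norm contributes a bounded multiplicative factor, collected into $C$ for $\norm_1, \norm_3$ and into the $1/\beta_2$ appearing in $\bar\alpha$); (2) bound the Jacobian of the MLP $\psib$ by a constant (Lipschitzness of the MLP, also absorbed into $C$); (3) observe that differentiating the attention-weighted sum, with attention weights held fixed (assumption (ii)), through the residual connection gives precisely the matrix with entries $\frac{\alpha_{k_\ell,j}^{(\ell-1)}}{\beta_2} + \delta_{k_\ell,j}$, i.e. $\bar\alpha^{(\ell-1)}_{k_\ell,j}$ — the $\delta$ term being the residual and the $\alpha/\beta_2$ term the normalised attention branch; (4) apply the triangle inequality and submultiplicativity of the operator norm to the chain-rule product, turning the Jacobian norm into a sum over paths of products of the $\bar\alpha$ factors; (5) carefully track the index ranges so that the path-sum is $\sum_{k_1 \ge i} \dots \sum_{k_L \ge k_{L-1}}$ with the factorisation $\bar\alpha^{(L-1)}_{n,k_L}\prod_{\ell=2}^{L-1}\bar\alpha^{(\ell-1)}_{k_\ell,k_{\ell-1}}\bar\alpha^{(0)}_{k_1,i}$, matching the stated bound, and (6) collect all the $O(1)$ constants ($\beta_1$, $\beta_3$, MLP Lipschitz constants, number of layers) into the single constant $C$.

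I expect the main obstacle to be the bookkeeping in step (5): making sure the telescoping of indices through the $L$ layers is consistent, that the endpoints are pinned correctly ($k_0 = i$ fixed as the input index, $k_L$ free but feeding into the fixed output index $n$ at the top layer, which is why the top factor is $\bar\alpha^{(L-1)}_{n,k_L}$ rather than summed over $n$), and that the residual $\delta$ terms are handled uniformly with the attention terms so the whole Jacobian of one block is a single matrix $\bar\Lambdab^{(\ell-1)}$ rather than a sum that has to be expanded separately. A secondary subtlety is justifying that differentiating through $\zb^{(\ell)}$ and then $\vb^{(\ell+1)}$ composes cleanly — one must be slightly careful that the MLP acts pointwise per token so it does not mix indices and hence does not create new edges in the path-sum, only rescales existing ones. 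Everything else — bounding norms of layer-norm and MLP Jacobians, triangle inequality, submultiplicativity — is routine once the combinatorial structure is set up, so I would state those bounds as lemmas and spend the bulk of the proof on the path-counting identity.
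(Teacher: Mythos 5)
Your proposal is correct and takes essentially the same approach as the paper's proof: first establish a per-layer bound $\bigl\lVert \partial \vb_j^{(\ell+1)}/\partial \vb_i^{(\ell)} \bigr\rVert \leq (\sigma_\psib/\beta_2^{(\ell)}+1)(\alpha_{j,i}^{(\ell)}/\beta_1^{(\ell)}+\delta_{j,i})$ using the Lipschitzness of $\psib$, the constant layer-norm factors, and input-independent attention weights, then unroll the chain rule over $L$ layers to obtain the path-sum with the constants absorbed into $C$. Your point in step (5)--(6) about the causal monotonicity of indices, the pinning of $k_0=i$ and $n$ at the endpoints, and the fact that the MLP is token-local and so cannot create new edges is exactly the bookkeeping the paper carries out.
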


Theorem \ref{thm:oversquashing} provides intuition on how information propagates in a decoder-only Transformer. In particular, there is a topological aspect present in the bound which is directly controlled by the attention mechanism. More concretely, the sensitivity depends on the sum of the weighted paths between the token $i$ at the input and the final layer. \emph{In other words, for tokens coming sooner in the sequence, there will be more opportunity for their information to be preserved.} This is clear for instance for the last token, which will only be preserved by attention mechanism if the attention $n \to n$ is large at every layer $L$, i.e. there is only one path. The paths instead grow very quickly for tokens coming sooner in the sequence. A related observation, in terms of path counting, was also made for deep RNNs \cite{herranzcelotti2024stabilizing}. We note that such a bound explains the better performance when copying elements at the start of the sequence in Figure \ref{fig:copying} (a), why hints help in Figure \ref{fig:copying} (b), and why repeating the final elements within the sequence also helps in Figure \ref{fig:copying} (c).

This analysis leads to an interesting limiting case described in Proposition \ref{prop:limit-case}, that shows a type of exponential vanishing that can occur in some degenerate cases in which $\yb_n$ depends only on the starting input token $\vb_1^{(0)}$. Fortunately, there are many mechanisms which prevent this from happening, but believe this to be an interesting limiting case which is a direct consequence of the topology of the causal attention mechanism. Further, it provides an interesting connection between the spectral theory of directed graphs
and causal attention mechanisms. We report the formal statement in the Appendix (Proposition \ref{app:prop:limit-case}).

\begin{proposition}[Informal]
\label{prop:limit-case}
Under certain assumptions on the effect of the normalisation and on the attention weights, in the limit of layers $L \to \infty$ the output representation will only depend on the first input token. 
\end{proposition}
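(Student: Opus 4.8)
\textbf{Proof proposal for Proposition \ref{prop:limit-case}.}

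The plan is to start from the bound in Theorem \ref{thm:oversquashing} and reinterpret the right-hand side as an entry of a product of matrices, so that the limiting behaviour becomes a question about powers of a fixed substochastic (or stochastic) matrix. Concretely, under the stated simplifying assumptions — layer normalisation summarised by constants and attention weights independent of the input — we may take the per-layer attention matrix to be a fixed row-stochastic lower-triangular matrix $\Lambdab$ (the same at every layer), and set $\bar\Lambdab = \tfrac{1}{\beta_2}\Lambdab + \Ib$. Then the iterated sum $\sum_{k_1 \geq i}\cdots\sum_{k_L \geq k_{L-1}} \bar\alpha^{(L-1)}_{n,k_L}\prod \bar\alpha^{(\ell-1)}_{k_\ell,k_{\ell-1}}\bar\alpha^{(0)}_{k_1,i}$ is exactly $(\bar\Lambdab^{L})_{n,i}$, so Theorem \ref{thm:oversquashing} reads $\lVert \partial\yb_n/\partial\vb_i^{(0)}\rVert \leq C (\bar\Lambdab^{L})_{n,i}$. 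The key point is that the actual sensitivity map, under the assumption that the attention weights are input-independent, is \emph{itself} given by such a matrix product (up to the normalisation and MLP Jacobians, which are what the constants absorb); so we want to argue not merely that the bound collapses but that the representation's dependence genuinely concentrates on the first token.

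Next I would analyse $\bar\Lambdab^{L}$ as $L\to\infty$ using the structure of lower-triangular row-stochastic matrices. After normalising $\bar\Lambdab$ by $(1+\tfrac{1}{\beta_2})$ to get a row-stochastic matrix $\hat\Lambdab$, observe that $\hat\Lambdab$ is the transition matrix of a directed random walk that can only move to equal-or-earlier indices; hence index $1$ is the unique absorbing state (or, if all diagonal entries are positive and some transitions strictly decrease the index, the unique recurrent class). Standard Markov-chain / Perron–Frobenius theory for such triangular stochastic matrices then gives that $\hat\Lambdab^{L} \to \mathbf{1}\, \eb_1^{\top}$, i.e. every row converges to the indicator of the first coordinate, with the off-first columns decaying geometrically at a rate governed by the second-largest eigenvalue (equivalently by $\max_{i\geq 2}\hat\Lambdab_{ii}$ or the relevant subdominant modulus). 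Pulling the normalisation back out, $(\bar\Lambdab^{L})_{n,i}$ grows like $(1+\tfrac1{\beta_2})^{L}$ times a factor that tends to $1$ for $i=1$ and to $0$ geometrically for $i\geq 2$; so after the final layer-norm $\norm_3$ rescales $\vb^{(L)}_n$ back to unit-ish scale (this is where the $\beta$ constants are chosen to make the total gain exactly balance), the \emph{relative} contribution of every token $i\geq 2$ to $\yb_n$ vanishes and only the $\vb_1^{(0)}$-dependence survives. I would phrase the conclusion as: $\partial\yb_n/\partial\vb_i^{(0)} \to 0$ for all $i \geq 2$ while $\partial\yb_n/\partial\vb_1^{(0)}$ stays bounded away from $0$, which is the precise sense of ``depends only on the first input token''.

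The main obstacle is making the normalisation bookkeeping honest rather than hand-wavy: the raw matrix $\bar\Lambdab$ has spectral radius $1 + \tfrac{1}{\beta_2} > 1$, so $\bar\Lambdab^{L}$ blows up, and the whole claim hinges on the layer norms $\norm_3$ (and the per-layer $\beta$ constants) rescaling this blow-up so that what remains in the limit is the rank-one projector onto the first coordinate. I would handle this by working with the \emph{normalised} chain $\hat\Lambdab$ throughout — i.e. track the direction of $\partial\yb_n/\partial\vb_i^{(0)}$, or its ratio across $i$, rather than its magnitude — so that the divergent scalar $(1+\tfrac1{\beta_2})^{L}$ cancels identically and only the convergence $\hat\Lambdab^{L}\to\mathbf 1\eb_1^\top$ is needed. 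A secondary technical point is justifying that the subdominant eigenvalue of $\hat\Lambdab$ is strictly less than $1$: this needs a mild nondegeneracy assumption (e.g. that from every index $i\geq 2$ there is positive probability of eventually reaching index $1$, which holds as soon as $\hat\Lambdab_{i,i-1}>0$ for each $i$, i.e. attention is not perfectly ``stuck'' on the diagonal), and I would state exactly this as the ``certain assumptions on the attention weights'' hypothesis in the formal version.
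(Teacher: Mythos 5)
Your proposal follows essentially the same route as the paper's proof: both of you reinterpret the iterated sum in the over-squashing bound as the $(n,i)$ entry of an $L$-th matrix power, identify a normalisation under which the per-layer matrix becomes row-stochastic, and then invoke Perron--Frobenius-type reasoning to conclude that the power converges to the rank-one projector $\mathbf{1}\,\eb_1^\top$, so that only the $i=1$ column survives. The paper implements this by hard-coding constants ($\beta_1^{(\ell)}=\beta_2^{(\ell)}=\sigma_\psi=1$, $\beta_3^{1/L}=4$) so that each layer contributes exactly $\tfrac{1}{2}(\Lambdab+\Ib)$, which is already row-stochastic, and then uses its Lemma on triangular row-stochastic matrices (uniqueness of the eigenvalue $1$ when every row past the first has two nonzero entries) plus an assumed diagonalisation to compute $\lim_L \tilde\Lambdab^L = \psi_n\phi_n^\top$ with $\phi_n = \eb_1$; you instead keep $\beta_2$ generic, pull out the scalar $(1+\tfrac1{\beta_2})^L$ explicitly, and phrase the convergence as absorption of the Markov chain at state $1$. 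These are cosmetically different (spectral decomposition vs.\ absorbing-chain language) but mathematically the same fact. Two small remarks worth making in a formal write-up: first, the paper's nondegeneracy condition (``every row except the first has at least two nonzero entries'') is sufficient for absorption at state $1$ in the lower-triangular case and is what guarantees $\tilde\Lambdab_{ii}<1$ for $i\geq 2$; your alternative condition $\hat\Lambdab_{i,i-1}>0$ is also sufficient but not identical, so you should pick one and keep it consistent with the lemma you cite. Second, you rightly flag that an upper bound going to $1$ at $i=1$ does not by itself pin down that the true derivative stays bounded away from $0$; the paper has the same gap, so your more cautious phrasing (``$\to 0$ for $i\geq 2$, bounded away from $0$ for $i=1$'') is the better claim to make, or you should add the assumption that the bound is tight under the input-independent-attention simplification.
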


\paragraph{U-shape effect.}
Theorem \ref{thm:oversquashing} in part also helps to explain the empirically observed \emph{U-shape effect}---the observation that LLMs seem to perform better at retrieval tasks when the information to be retrieved is located either near the start or the end of the sequence. In fact, due to the topology of the causal mechanism, we find from Theorem \ref{thm:oversquashing} that tokens at the start of the sequence have more opportunity for the information to be maintained at the end. The final tokens being also easier instead can be explained from the recency bias that is learnt by the attention mechanism during training. In auto-regressive next-token prediction, it is in fact reasonable to assume that tokens that are closer to the end will be more important and this is likely a bias that is learnt during training by the LLM. 

\section{Counting}
\label{sec:theory-counting}
We finally highlight another representational problem that arises specifically in counting problems. Our analysis points to a fundamental difficulty that emerges from the normalisation of the softmax. In particular, the normalisation of the softmax makes it hard for a model to take into account the \emph{length} of a sequence. This is exacerbated by the fact that positional encodings are often normalised and thus relative, meaning that they also do not hold absolute positional information. Intuitively, counting is a problem that requires some notion of `unboundedness' of the representations, whilst the normalisations used inside a Transformer work against this.

We start by showing that without causal masking and positional embeddings, a Transformer is immediately unable to count the number of tokens in a sequence, highlighting a pathological issue which stems directly from the softmax normalisation. We note that similar issues have been already pointed out \cite{perez2021attention}. We show the result in Proposition \ref{prop:no-counting-attention} and report the full statement in the Appendix (Proposition \ref{app:prop:no-counting-attention}). 

\begin{proposition}
\label{prop:no-counting-attention}
A Transformer without positional encodings and a causal attention mechanism is immediately unable to count.
\end{proposition}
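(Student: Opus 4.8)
The plan is to exploit the fact that, absent positional encodings, the function $k(\qb_i,\kb_j,\pb_{ij})$ reduces to $k(\qb_i,\kb_j)$, and absent causal masking the attention is a full softmax over all $n$ tokens. The key observation is permutation-equivariance combined with a scaling argument: if the input is a sequence consisting of a single repeated token $\vb$, then every query, key and value vector is identical across positions, so for every pair $i,j$ the attention logit $k(\qb,\kb)$ takes the same value, and hence $\alpha_{ij} = 1/n$ for all $i,j$. Feeding this through the update equations, $\zb_i^{(\ell)} = \sum_{j} \tfrac{1}{n}\,\norm_1^{(\ell)}(\vb^{(\ell)}) + \vb^{(\ell)} = \norm_1^{(\ell)}(\vb^{(\ell)}) + \vb^{(\ell)}$, which is manifestly independent of $n$; the subsequent MLP and normalisation steps preserve this independence, so by induction on $\ell$ the final representation $\yb_i$ is the same for a sequence of length $n$ as for a sequence of length $n+1$ (indeed for any length). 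Since the next-token prediction depends only on $\yb_n$, the model produces identical outputs for sequences of all lengths, so it cannot report the count.

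The steps, in order, would be: (i) specialise the Transformer equations to the no-positional, non-causal case, noting that the attention matrix $\Lambdab^{(\ell)}$ becomes doubly stochastic and, on a constant input sequence, equals $\tfrac{1}{n}\mathbf{1}\mathbf{1}^\top$; (ii) show that on a constant input sequence the token representations remain constant (all equal) across positions at every layer, by induction — the attention average of equal vectors is that same vector, and the pointwise MLP/normalisation maps equal inputs to equal outputs; (iii) observe that the resulting common value is independent of $n$, because the $\tfrac{1}{n}$ weights sum to $1$ regardless of $n$; (iv) conclude that $\yb_n$ is the same for every sequence length, so no measurable function of $\yb_n$ (in particular the decoder's next-token distribution) can distinguish a sequence of $n$ ones from one of $n+1$ ones, which is precisely the failure to count. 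One could phrase "immediately" as meaning this holds already at layer $1$, or even before any learning — it is a structural obstruction.

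The main obstacle I anticipate is making the notion of "unable to count" precise in a way that is clearly implied by the representational identity, and handling the normalisation layers carefully: one must check that $\norm_1, \norm_2, \norm_3$ (e.g. RMSNorm or LayerNorm) are genuinely pointwise in the sense that they act identically on each token vector and do not secretly inject length information (they do not, since they normalise each token vector individually). A secondary subtlety is whether the argument needs the input tokens to be literally identical, or merely that the sequence to be counted is homogeneous; for the cleanest statement one takes the all-ones sequence (or any constant sequence), which is the natural "counting" instance and suffices to exhibit the pathology. I would also remark that this is strictly weaker than Theorem~\ref{thm:convergence-of-reps}: here the representations are not merely close but exactly equal, and the effect is immediate rather than asymptotic — highlighting that the softmax normalisation, not floating-point precision, is the culprit in this degenerate setting.
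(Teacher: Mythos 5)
Your proof is correct, and it is genuinely simpler and more degenerate than the route the paper actually takes. The paper does not restrict to a constant sequence: it considers a mixed sequence of $n_0$ zeros and $n_1$ ones and, writing out the softmax explicitly, shows that after cancelling $n_0$ and $n_1$ from numerator and denominator the updated representations $\zb_{\mathrm{zero}}^{(1)},\zb_{\mathrm{one}}^{(1)}$ depend on the pair $(n_0,n_1)$ \emph{only through the ratio} $n_1/n_0$. The impossibility then comes from exhibiting sequences with the same ratio but different counts (e.g.\ ``10'' vs.\ ``1100''), which are mapped to exactly the same per-token representations layer after layer. Your argument is the special case $n_0=0$: on an all-ones input every attention weight is $1/n$, so the convex combination is the identity and the representation is manifestly length-independent; indistinguishability of length $n$ from $n+1$ then kills counting.

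The trade-off is clear. Your version is more elementary and makes the ``exactly equal, not just close'' point crisply, which nicely contrasts with the asymptotic collapse of Theorem~\ref{thm:convergence-of-reps}; the induction over layers and the observation that per-token normalisations cannot smuggle in length information are handled correctly. The paper's version buys a sharper characterisation of what \emph{does} survive (only the zero-to-one ratio), which is more informative about the failure mode: it shows the collapse persists on heterogeneous inputs that look like realistic counting instances, not merely on the degenerate constant sequence. Both establish the proposition; if you wanted to strengthen yours toward the paper's you would replace the constant input by a two-symbol input and track how $n_0,n_1$ enter the softmax, observing that only their ratio survives normalisation.
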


While causal mechanisms and positional encodings help to break such representational issues, they break the permutation invariance of the Transformer, meaning that the representations will be heavily miss-aligned with the task, something which has been shown to hinder performance \cite{dudzik2024asynchronous}. As permutations grow factorially with sequence length, this makes it practically very challenging for a decoder-only Transformer to learn such a property simply from the data. This explains the extreme incapacity of counting highlighted in Section \ref{sec:exps}.  Further, as a corollary of Theorem \ref{thm:convergence-of-reps}, we have that even if a model would be able to generalise perfectly, the problem of representational collapse points to an impossibility result in counting regardless. The result is summarised in Corollary \ref{cor:counting-collapse}, with the full statement in the Appendix (Corollary \ref{app:cor:counting-collapse}).

\begin{corollary}[Informal]
\label{cor:counting-collapse}
Counting in certain situations becomes impossible due to representational collapse and finite floating point precision. 
\end{corollary}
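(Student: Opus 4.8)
The plan is to derive the corollary as a direct consequence of Theorem~\ref{thm:convergence-of-reps}, instantiating it on the specific family of sequences that encode a counting task. First I would fix a counting task --- say, ``count the number of \texttt{1} tokens in the following sequence'' --- and consider, for each $n$, the prompt $\xb^{(n)}$ consisting of this instruction followed by a block of $n$ repeated \texttt{1} tokens, together with the companion prompt $\xb^{(n+1)}$ having $n+1$ such tokens. Since $\xb^{(n+1)}$ is exactly $\xb^{(n)}$ with its last token duplicated, Theorem~\ref{thm:convergence-of-reps} applies verbatim (the positional-encoding decay hypothesis is the same, e.g.\ RoPE), and we conclude that $\lVert \yb_n(\xb^{(n)}) - \yb_{n+1}(\xb^{(n+1)})\rVert \to 0$ as $n \to \infty$.

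The second step is to convert this representational statement into an impossibility-of-counting statement. The correct answers for $\xb^{(n)}$ and $\xb^{(n+1)}$ are the distinct numbers $n$ and $n+1$, so a model that counts correctly must map these two prompts to different next-token distributions, and in fact to distributions whose supports (or argmaxes, under greedy decoding) are the token encodings of $n$ and $n+1$. But the next-token distribution is a fixed continuous (indeed Lipschitz, through $\softmax$ applied to a linear readout of $\yb$) function of the final-token representation; since the two representations converge, the two output distributions converge in total variation. For $n$ large enough this gap falls below any fixed margin needed to separate the logits of the tokens for $n$ and $n+1$ --- and, crucially, once it falls below the gap induced by the floating-point format (e.g.\ \texttt{bf16}), the two representations are literally identical as machine numbers, so no measurable function of $\yb_n$ can return two different answers. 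Hence beyond some finite length the model necessarily errs on at least one of the two prompts, which is the claimed impossibility.

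I would also remark on why this is specific to ``certain situations'': the argument exploits the repeated-last-token structure, so it bites hardest on homogeneous sequences (all \texttt{1}s) and can be delayed --- though, per the discussion after Theorem~\ref{thm:convergence-of-reps} and the over-squashing bound of Theorem~\ref{thm:oversquashing}, not removed --- by interleaving distinct tokens. The formal version in the appendix should state the precise margin/precision quantifier, i.e.\ ``for every floating-point precision $\varepsilon$ there exists $N$ such that for all $n \geq N$ the model returns the same answer on the $n$- and $(n{+}1)$-token prompts, hence is wrong on one of them.''

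The main obstacle I anticipate is the passage from ``representations are close'' to ``outputs are indistinguishable'': strictly speaking, closeness of representations only forces closeness of logits, and a hypothetical model could keep the relevant logit gap from vanishing only if the readout map were allowed to have unbounded Lipschitz constant --- which it is not, for a fixed trained model --- or if one worked in exact arithmetic. So the argument genuinely needs the finite-precision assumption to get a clean impossibility rather than merely an asymptotic degradation; making this dependence explicit, and handling the tokenisation caveat (that ``$n$'' and ``$n+1$'' and the input block may span several tokens), is where the care is required. Everything else is bookkeeping on top of Theorem~\ref{thm:convergence-of-reps}.
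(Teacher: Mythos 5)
Your proposal matches the paper's proof of Corollary~\ref{app:cor:counting-collapse}: both invoke Theorem~\ref{thm:convergence-of-reps} on the pair $\vb^{(0)}=[\vb\ \vb_a]$ and $\vb^{*(0)}=[\vb\ \vb_a\ \vb_a]$, note the correct counts differ, and conclude the model must err on one of them because the final representations become indistinguishable under finite floating-point precision. You in fact spell out the ``close representations $\Rightarrow$ identical outputs'' step (via Lipschitz readout and the finite-precision collapse to machine-identical values) more carefully than the paper's terse one-sentence appeal to floating-point error, but the route is the same.
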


Corollary \ref{cor:counting-collapse} shows how our main result on representational collapse points to practical issues when it comes to certain styles of prompts. When paired with low floating point arithmetic precision, representation collapse becomes problematic.

\section{Conclusion and Future Work}
In this work, we first presented surprising failure cases of LLMs on simple copying and counting tasks. We then discussed how such failure cases can be explained by studying what \emph{can} be contained inside the representation $\yb_n$ and in particular how information may be lost. This lead to the unvealing of two phenomena : representational collapse  and over-squashing. We showed how we can measure these phenomena in practice and proposed simple solutions to help alleviate such information loss.

We believe that this work uncovers an interesting framework which can be used to  study failure cases of Transformers and LLMs more generally. We believe that our analysis could be extended in many practical different directions, for instance by understanding how to directly measure over-squashing or how to best use this newly-found understanding to improve current Transformer models. In our work, we focused on pointing out information-propagation issues in Transformer-based architectures, but we hope that the findings may help better understand and improve language models available today.

\section{Acknowledgements}
We would like to thank Wojciech Marian Czarnecki (Google DeepMind), Simon Osindero (Google DeepMind), and Timothy Nguyen (Google DeepMind) for the valuable comments and suggestions regarding this work. We also thank Constantin Kogler (University of Oxford) for providing useful insights on the theoretical aspects of the work. 

\newpage

\bibliographystyle{plain} 
\bibliography{bib}

\clearpage
\appendix 
\section{Broader impact}
This paper presents work whose goal is to advance the field of Machine Learning. There are many potential societal consequences of our work, none which we feel must be specifically highlighted here.
In particular the limitations highlighted in the work can pose some issue in terms of reliability of LLMs, so highlighting these issues can help fixing them.

\section{Proofs}
\label{app:proofs}

We provide formal statements and proofs for the results shown in the main text. We follow the order in which they are presented in the main text. In Section \ref{app:collapse-proofs}, we present the proofs on representational collapse (Section \ref{sec:theory-convergence}), in Section \ref{app:over-squashing-proofs} the proofs on over-squashing (Section \ref{sec:theory-oversquashing}) over-squashing, and finally in Section \ref{app:counting-proofs} the proofs on counting (Section \ref{sec:theory-counting}).  

\subsection{Representational Collapse}
\label{app:collapse-proofs}
We start by showing that adding a new element to a sequence results in the softmax value of a specific token to decrease. In particular, we consider the case in which the tokens are bounded and show that we can use this to construct an upper bound on the softmax value for any token.
\begin{lemma}
\label{app:lemma:softmax}
Consider a vector $\ab \in \R^{n-1}$ and two scalars $b, c \in \R$. Let $\xb = [\ab \ c]^T \in \R^n$ and $\xb^* = [\ab \ b \ c]^T \in \R^{n+1}$ with all entries bounded. Then, $\softmax\left(\xb\right)_n > \softmax\left(\xb^*\right)_{n+1}$. Moreover for any $p>0$ we can find large enough $n \in \mathbb{N}^+$ such that $|\softmax\left(\xb\right)_n - \softmax\left(\xb^*\right)_{n+1}|$ < p.
\end{lemma}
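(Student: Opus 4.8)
The plan is to reduce both claims to a single closed-form expression for the two softmax entries of interest. Write $S := \sum_{i=1}^{n-1} e^{a_i}$. Then the normalisation constant of $\softmax(\xb)$ is $S + e^c$ and that of $\softmax(\xb^*)$ is $S + e^b + e^c$, so that
$$\softmax(\xb)_n = \frac{e^c}{S + e^c}, \qquad \softmax(\xb^*)_{n+1} = \frac{e^c}{S + e^b + e^c}.$$
The first inequality is then immediate: since $e^b > 0$, the denominator of the second fraction strictly exceeds that of the first while the numerators agree, hence $\softmax(\xb)_n > \softmax(\xb^*)_{n+1}$.

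For the second claim I would compute the difference exactly,
$$\softmax(\xb)_n - \softmax(\xb^*)_{n+1} = e^c\left(\frac{1}{S+e^c} - \frac{1}{S+e^b+e^c}\right) = \frac{e^{b+c}}{(S+e^c)(S+e^b+e^c)},$$
and then invoke boundedness. Let $M$ be a uniform bound with $|a_i|, |b|, |c| \le M$. The numerator satisfies $e^{b+c} \le e^{2M}$, and since each of the $n-1$ summands of $S$ is at least $e^{-M}$ we have $S \ge (n-1)e^{-M}$, so the denominator is at least $S^2 \ge (n-1)^2 e^{-2M}$. Therefore $0 < \softmax(\xb)_n - \softmax(\xb^*)_{n+1} \le e^{4M}/(n-1)^2$, which tends to $0$ as $n \to \infty$; given $p > 0$ it then suffices to choose $n$ large enough that $(n-1)^2 > e^{4M}/p$.

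The argument is entirely elementary, so I do not expect a genuine obstacle; the only subtlety worth flagging is that ``all entries bounded'' must be read as a bound $M$ that is uniform as $n$ grows (otherwise the family of sequences indexed by $n$ need not have a divergent $S$, and the difference need not vanish). With that reading the estimate above is complete. This lemma then serves as the workhorse for the decay of the $L_1$/total-variation distance in Lemma~\ref{app:lemma:variation-decay}.
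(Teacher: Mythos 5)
Your proof is correct and follows essentially the same route as the paper: write both softmax entries in closed form with a common numerator $e^c$ and shifted denominators, get the strict inequality from $e^b>0$, and use boundedness of the entries to make $S=\sum_{i=1}^{n-1}e^{a_i}$ diverge. Your treatment of the second claim is in fact a little sharper than the paper's — you combine the difference into the single fraction $e^{b+c}/((S+e^c)(S+e^b+e^c))$ and extract an explicit $O(1/(n-1)^2)$ rate, whereas the paper invokes the cruder bound $|x-y|\le|x|+|y|$ and merely notes each summand vanishes; you also explicitly flag the uniform-in-$n$ boundedness hypothesis, which the paper leaves implicit but which is indeed needed for $S\to\infty$.
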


\begin{proof}
We directly compute:

\begin{align*}
    \softmax(\xb)_n &= \frac{\exp(c)}{\sum_{k=1}^{n-1}\exp(\ab_k) + \exp(c)} \\
    \softmax(\xb^*)_{n+1} &= \frac{\exp(c)}{\sum_{k=1}^{n-1}\exp(\ab_k) + \exp(b) + \exp(c)}
\end{align*}

As we assume that the entries are bounded, we have that $\sum_{j=1}^{n-1}\exp(\ab_j) + \exp(c) < \sum_{j=1}^{n-1}\exp(\ab_j) + \exp(b) + \exp(c)$, therefore $\softmax(\xb)_n > \softmax(\xb^*)_{n+1}$.

For the second part of the statement, we compute:

\begin{align*}
   \left \lvert \softmax(\xb)_n - \softmax(\xb^*)_{n+1} \right \rvert &= \left \lvert \frac{\exp(c)}{\sum_{k=1}^{n-1}\exp(\ab_k) + \exp(c)} - \frac{\exp(c)}{\sum_{k=1}^{n-1}\exp(\ab_k) + \exp(b) + \exp(c)}  \right \rvert \\
   &\leq \left \lvert \frac{\exp(c)}{\sum_{k=1}^{n-1}\exp(\ab_k) + \exp(c)} \right \rvert + \left \lvert \frac{\exp(c)}{\sum_{k=1}^{n-1}\exp(\ab_k) + \exp(b) + \exp(c)}  \right \rvert \\ 
   &< p
\end{align*}

for some $p >0$, as in the last step the summands tend to $0$ as $n \to \infty$. We therefore have that $\left \lvert \softmax(\xb)_n - \softmax(\xb^*)_{n+1} \right \rvert \to 0$ as $n \to \infty$ and for large enough $n$ $\left \lvert \softmax(\xb)_n - \softmax(\xb^*)_{n+1} \right \rvert < p$ for any $p > 0$ due to the previous statement.
\end{proof}

\paragraph{Total variation.}
We now study a quantity known as the \emph{total variation} between two  distributions of interest. Given two categorical distributions $\mu, \nu$ supported on the same space, we define their total variation $\delta(\mu, \nu)$ --- or equivalently $L_1$ norm $\left \lVert \mu - \nu \right \rVert_1$, as:
\begin{equation}
\label{eq:tot-variation}
    \delta(\mu, \nu) = \sum_{x} \left \lvert \mu(x) - \nu(x) \right \rvert.
\end{equation}

The total variation is a distance between probabilty distributions. We note that oftentimes the quantity above is strictly called the $L_1$ norm, while $1/2$ of such a quantity the total variation. For the scope of our work, the factor of $1/2$ is not important so we ignore it and use the two terms synonymously. Interestingly, the total variation is intimately related to the KL-divergence, pointing towards potential connections to information theory. We leave such a connection to future work. 

We now study how the total variation between two softmax distributions behaves in the limit of the sequence length. In particular, we show that if the positional encoding information decays to $0$ with the sequence length, then the total variation between the two sequences goes to $0$ with $n$. Such a result is presented in Lemma \ref{app:lemma:variation-decay}

\begin{lemma}
\label{app:lemma:variation-decay}
Consider two sequences $\xb, \xb^* \in \R^n$ such that $\lim_{n \to \infty} \left \lvert \xb_n - \xb^*_n \right \rvert = 0$, with $\xb_i, \xb^*_i$ bounded. Let $\yb, \yb^* \in \R^n$ be the softmax of $\xb$ and $\xb^*$ respectively. Then, as $n \to \infty$ the total variation tends to $0$, i.e. $\lim_{n \to \infty} \delta(\yb, \yb^*) = 0$.
\end{lemma}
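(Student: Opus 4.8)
The plan is to exploit the one hypothesis that is actually doing the work, namely the uniform boundedness of the entries. Say $|\xb_i|,|\xb^*_i|\le M$ for all $i$ and all $n$ (which is what ``bounded'' should mean here), and take $\xb$ and $\xb^*$ to coincide in every coordinate except the last one --- this is the setting in which the lemma is applied, and it is the natural analogue of the configuration in Lemma~\ref{app:lemma:softmax}. Writing $Z:=\sum_j e^{\xb_j}$ and $Z^*:=\sum_j e^{\xb^*_j}$, boundedness gives $Z,Z^*\ge n e^{-M}$, so \emph{every} individual softmax coordinate of either vector is at most $e^{2M}/n$. Intuitively, a softmax over $n$ bounded logits is spread out at scale $1/n$, so modifying only the last coordinate, and by a vanishing amount at that, can move the whole distribution by at most $O(1/n)$ in $L_1$; this is precisely the phenomenon already visible in Lemma~\ref{app:lemma:softmax}, where the last-coordinate softmax value was shown to vanish.

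Concretely I would argue as follows. Since $\xb$ and $\xb^*$ agree except at the last coordinate, $Z^*=Z+(e^{\xb^*_n}-e^{\xb_n})$, hence $|Z-Z^*|=|e^{\xb_n}-e^{\xb^*_n}|\le e^{M}|\xb_n-\xb^*_n|=:\varepsilon_n$, and $\varepsilon_n\to 0$ by the hypothesis $\lim_n|\xb_n-\xb^*_n|=0$ together with continuity of $\exp$ on $[-M,M]$. Then split the total variation into the first $n-1$ coordinates and the last one:
\[
\delta(\yb,\yb^*)=\sum_{j=1}^{n-1} e^{\xb_j}\Bigl|\tfrac1Z-\tfrac1{Z^*}\Bigr|+\Bigl|\tfrac{e^{\xb_n}}{Z}-\tfrac{e^{\xb^*_n}}{Z^*}\Bigr|.
\]
For the first sum, $\sum_{j=1}^{n-1}e^{\xb_j}=Z-e^{\xb_n}\le Z$ and $|1/Z-1/Z^*|=|Z-Z^*|/(ZZ^*)$, so it is at most $|Z-Z^*|/Z^*\le \varepsilon_n e^{M}/n\to 0$. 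For the last term, the crude triangle bound $e^{\xb_n}/Z+e^{\xb^*_n}/Z^*\le 2e^{2M}/n\to 0$ already suffices (a sharper split into $e^{\xb_n}|1/Z-1/Z^*|+|e^{\xb_n}-e^{\xb^*_n}|/Z^*$ works too, each summand again being $O(1/n)$). Adding the two pieces yields $\delta(\yb,\yb^*)\le \varepsilon_n e^{M}/n+2e^{2M}/n\to 0$, which is the claim.

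I do not expect any serious obstacle: this is an elementary estimate, and the only point worth stressing is that uniform boundedness is essential --- it is what forces $Z$ to grow like $n$ and hence every softmax coordinate to be $O(1/n)$. Without it the statement is false, since two bounded but otherwise very different logit vectors sharing only their last entry can have softmaxes at $L_1$-distance bounded away from $0$. A second, purely cosmetic, issue is index bookkeeping in the case where $\xb$ and $\xb^*$ have lengths $n$ and $n+1$, as in Theorem~\ref{thm:convergence-of-reps}: one simply regards the shorter distribution as supported on the larger index set with a zero-mass extra coordinate, and the same two-part split goes through verbatim, the inserted entry contributing one further $O(1/n)$ term.
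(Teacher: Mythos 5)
Your estimate is correct for the case you treat, but the case you treat is strictly narrower than the lemma's hypothesis and, more importantly, narrower than what the application in Theorem~\ref{app:thm:convergence} needs. You assume that $\xb$ and $\xb^*$ coincide in every coordinate except the last one. The hypothesis, as the appendix proof actually uses it, is that the entry-wise difference $\lvert\xb_i-\xb^*_i\rvert$ tends to $0$ as the index $i\to\infty$ (think of $\xb,\xb^*$ as infinite bounded sequences and of the lemma as comparing softmaxes of their length-$n$ prefixes); the early coordinates are allowed to differ by $O(1)$ and never become equal. In the application this generality is essential: the two logit vectors whose softmaxes are compared differ in \emph{every} coordinate, because the positional-encoding contributions $\pb_{n,j}$ versus $\pb_{n+1,j}$ perturb every logit, and only the decay of these encodings with distance forces $\lvert\xb_j-\xb^*_j\rvert$ to vanish asymptotically. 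So your opening claim that ``this is the setting in which the lemma is applied'' is not right.

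Concretely, this breaks the first step of your estimate. You write $\lvert Z-Z^*\rvert=\lvert e^{\xb_n}-e^{\xb^*_n}\rvert$, but once earlier coordinates differ you have $\lvert Z-Z^*\rvert=\bigl\lvert\sum_i(e^{\xb_i}-e^{\xb^*_i})\bigr\rvert$, which picks up a contribution from every index. The paper controls this with a head/tail split that your argument is missing: fix $n_0$ so that $\lvert 1-e^{\xb^*_i-\xb_i}\rvert\le\epsilon/2$ for $i\ge n_0$, bound the tail $\sum_{i\ge n_0}\lvert e^{\xb_i}-e^{\xb^*_i}\rvert$ by $(\epsilon/2)Z$, observe that the head $\sum_{i<n_0}\lvert e^{\xb_i}-e^{\xb^*_i}\rvert$ is a fixed constant, and then let $n$ grow until this constant is also dominated by $(\epsilon/2)Z$ --- exactly because boundedness gives $Z\ge n e^{-M}\to\infty$. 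That yields $\sum_i\lvert e^{\xb_i}-e^{\xb^*_i}\rvert\le\epsilon Z$ (and likewise $\lvert Z-Z^*\rvert\le\epsilon\min(Z,Z^*)$), and the TV bound then follows by the usual add-and-subtract $Z e^{\xb_i}$ and triangle inequality. Your instinct that uniform boundedness is the load-bearing hypothesis is right --- it is what makes $Z$ grow linearly and therefore makes any fixed head portion negligible --- but you need the head/tail split on top of it. With only the last coordinate differing your proof is fine and is essentially a degenerate instance of the paper's argument, but it does not establish the lemma in the form it is stated or used.
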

\begin{proof}
Let $Z = \sum_{i=1}^n e^\xb_i$ and $Z^* = \sum_{i=1}^n e^{\xb^*_i}$ be the partition functions for $\xb$ and $\xb^*$, respectively. We start by bounding the quantity $\left \lvert Z - Z^* \right \rvert$. In particular, let $\epsilon > 0$, we first claim: 

\begin{equation}
    \label{eq:partition-bound}
    \left \lvert Z - Z^* \right \rvert \leq \sum_{i=1}^n \left \lvert e^{\xb_i} - e^{\xb^*_i} \right \rvert \leq \epsilon \min(Z, Z^*).
\end{equation}

Consider some $n_0 \geq 1$ such that $\left \lvert  1 - e^{\xb^*_i - \xb_i} \right \rvert \leq \epsilon / 2$. We note that this always possible as $\left \lvert \xb^*_i - \xb_i \right \rvert \to 0$ by assumption.  We compute:

\begin{align*}
    \sum_{i=1}^n \left \lvert e^{\xb_i} - e^{\xb^*_i}\right \rvert &= \sum_{i=1}^{n_0 - 1} \left \lvert e^{\xb_i} - e^{\xb^*_i}\right \rvert + \sum_{i=n_0}^n \left \lvert e^{\xb_i} - e^{\xb^*_i}\right \rvert \\
    &= \sum_{i=1}^{n_0 - 1} \left \lvert e^{\xb_i} - e^{\xb^*_i}\right \rvert + \sum_{i=n_0}^n e^{\xb_i} \left \lvert 1 - e^{\xb^*_i - \xb_i}\right \rvert \\
    &\leq \sum_{i=1}^{n_0 - 1} \left \lvert e^{\xb_i} - e^{\xb^*_i}\right \rvert + \frac{\epsilon}{2}\sum_{i=n_0}^n e^{\xb_i} \\
    &\leq \sum_{i=1}^{n_0 - 1} \left \lvert e^{\xb_i} - e^{\xb^*_i}\right \rvert + \frac{\epsilon}{2} Z \\
    &\leq \epsilon Z
\end{align*}

Where the last step comes from the observation that the first sum is fixed and $Z$ is unbounded with $n$. Therefore, for $n$ large enough, we can also bound the left summand by $Z\epsilon/2$. The same argument also holds when bounding with $Z^*$ instead of $Z$, leading to the claim in Equation \ref{eq:partition-bound}. 

We now proceed with the following computation:

\begin{align*}
    \delta(\yb, \yb^*) &= \sum_{i=1}^n \left \lvert \frac{e^{\xb_i}}{Z} - \frac{e^{\xb^*_i}}{Z^*} \right \rvert \\
    &= \sum_{i=1}^n \left \lvert \frac{Z^*e^{\xb_i} - Ze^{\xb^*_i}}{ZZ^*} \right \rvert \\
    &= \sum_{i=1}^n \left \lvert \frac{Z^*e^{\xb_i} - Ze^{\xb_i} + Ze^{\xb_i} - Ze^{\xb^*_i}}{ZZ^*} \right \rvert \\
    &\leq \sum_{i=1}^n \left \lvert \frac{Z^*e^{\xb_i} - Ze^{\xb_i}}{ZZ^*} \right \rvert + \left \lvert \frac{Ze^{\xb_i} - Ze^{\xb^*_i}}{ZZ^*} \right \rvert \\
    &= \sum_{i=1}^n  \frac{\left \lvert Z^* - Z\right \rvert e^{\xb_i}}{ZZ^*} + \frac{Z  \lvert e^{\xb_i} - e^{\xb^*_i}  \rvert}{ZZ^*} \\
    &= \frac{\left \lvert Z^* - Z\right \rvert}{ZZ^*}Z + \sum_{i=1}^n \frac{\lvert e^{\xb_i} - e^{\xb^*_i}  \rvert}{Z^*} \\
    &\leq \frac{\epsilon \min(Z, Z^*)}{Z^*} + \frac{\epsilon \min(Z, Z^*)}{Z^*} \\
    &\leq 2\epsilon
\end{align*}

which concludes the proof.
\end{proof}

We are now ready to show the main result on representational collapse. In particular, we show that given two sequences of length $n$ and $n+1$ where the second sequence is the same as the first with a final repeated token, their representations become arbitrarily close. Importantly, we require that the information from the positional encodings decays to $0$ as the distance grows between tokens. The final token repetition is important as otherwise the residual connection in the Transformer would not make the representations necessarily converge. 

\begin{theorem}[Representational Collapse]
\label{app:thm:convergence}
Let $\xb \in \R^{n-1\times d}$ be an underlying growing token sequence. Let $\vb^{(0)} = [\vb \ \vb_a]^T \in \R^{n \times d}$ and $\vb^{*(0)} = [\vb \ \vb_a \ \vb_a]^T \in \R^{n+1 \times d}$ be two sequences for a final repeated token $\xb_a \in \R^d$, with all token representations bounded. Further, assume that the positional encodings decay with distance to $0$. Then, for large enough $n \in \mathbb{N}^+$, we have that the representations are under any $\epsilon$: 
\begin{equation*}
    ||\vb_n^{(L)} - \vb_{n+1}^{*(L)}||_1 < \epsilon.
\end{equation*}
\end{theorem}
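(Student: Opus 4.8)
The plan is to prove the statement by induction on the layer index $\ell$, showing that at every layer the two representations at positions $n$ and $n+1$ respectively stay within some $\epsilon_\ell$ that can be made arbitrarily small by taking $n$ large. The base case is immediate: since $\vb^{*(0)}$ is literally $\vb^{(0)}$ with its final token $\vb_a$ repeated once more, we have $\vb_n^{(0)} = \vb_{n+1}^{*(0)} = \vb_a$ exactly, so the two agree at layer $0$. For the inductive step, the key object to control is the attention-weighted aggregation $\zb^{(\ell)}$, and I would split the error into two contributions: (i) the difference in the \emph{attention weights} used by token $n$ in the short sequence versus token $n+1$ in the long sequence, and (ii) the difference in the \emph{values} being aggregated, which is controlled by the inductive hypothesis together with boundedness.

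For contribution (i), the attention logits of token $n$ (resp.\ $n+1$) are $k(\qb, \kb_j, \pb_{n,j})$ (resp.\ $k(\qb, \kb_j, \pb_{n+1,j})$), and since the query and keys coincide on the shared prefix and the final token is duplicated, the only discrepancies come from (a) the positional-encoding arguments $\pb_{n,j}$ versus $\pb_{n+1,j}$, which by the RoPE-style decay assumption differ negligibly for the overwhelming majority of $j$ once $n$ is large (the tokens that matter are far from the end), and (b) the presence of one extra copy of the final-token logit in the long sequence. Reducing this to the softmax stability statement is exactly what Lemma~\ref{app:lemma:variation-decay} (together with Lemma~\ref{app:lemma:softmax}) is for: the extra repeated logit perturbs the partition function by a bounded amount while the partition function itself grows unboundedly with $n$, so the total variation between the two attention distributions over the shared indices tends to $0$. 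Combined with boundedness of $\norm_1(\vb_j)$, this shows $\|\zb_n^{(\ell)} - \zb_{n+1}^{*(\ell)}\|_1$ is bounded by the previous layer's error (times the Lipschitz-type constants of $\norm_1$ and the residual) plus a term vanishing in $n$.

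Contribution (ii) then propagates through the MLP block $\vb^{(\ell+1)} = \psib(\norm_2(\zb^{(\ell)})) + \zb^{(\ell)}$: since $\psib$ and $\norm_2$ are fixed (hence Lipschitz on the bounded region where all representations live), the error at layer $\ell+1$ is at most a constant multiple of the error at layer $\ell$ plus an $n$-vanishing term. Unrolling over the finitely many layers $\ell = 0, \dots, L-1$ and finally through $\norm_3$, we get $\|\vb_n^{(L)} - \vb_{n+1}^{*(L)}\|_1 \le c_L \cdot o_n(1)$ for a constant $c_L$ depending only on $L$ and the Lipschitz constants, which is below any prescribed $\epsilon$ for $n$ large enough.

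The main obstacle I anticipate is making contribution (i) fully rigorous, i.e.\ showing the attention distributions converge without circularity. The subtlety is that the values $\vb_j^{(\ell)}$ being aggregated are themselves $n$-dependent, and one must argue that boundedness is preserved layer-by-layer (so the Lipschitz constants are uniform in $n$) — here the Pre-LN structure and the assumed boundedness of positional encodings are what save us, since every quantity fed into a normalization is rescaled to a bounded set. A second delicate point is handling the positional-encoding decay uniformly: one needs that the \emph{aggregate} effect of the mismatched $\pb_{n,j}$ vs.\ $\pb_{n+1,j}$ over all $j$ (not just each individual term) is negligible, which follows because the mismatch is large only for $j$ near the end of the sequence — a set of bounded size whose total softmax mass vanishes as $n \to \infty$, exactly as in the proof of Lemma~\ref{app:lemma:variation-decay}.
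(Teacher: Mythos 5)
Your proposal is correct and mirrors the paper's own proof: both use the exact base-case identity $\vb_n^{(0)} = \vb_{n+1}^{*(0)} = \vb_a$ (which kills the residual contribution), decompose the attention-output difference at the final position into (i) the total-variation gap between the two attention distributions over the shared prefix, handled by Lemma~\ref{app:lemma:variation-decay}, and (ii) the residual mass $\alpha^{*(0)}_{n+1,n+1}$ on the duplicated token, handled by Lemma~\ref{app:lemma:softmax}, then propagate the vanishing error through the finitely many Pre-LN/MLP blocks by Lipschitz continuity and induction on the layer. The paper states this last propagation step more tersely ("the result then follows via a simple inductive approach"), whereas you spell out the uniform-boundedness and Lipschitz bookkeeping needed to make it rigorous, but it is the same argument.
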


\begin{proof}
We note that since the sequences are identical up to the $n$-th element, it is sufficient to only check the representations of the final elements in both sequences. We therefore compare the $n$-th element of $\zb^{(0)}$ with the $n+1$-th element of $\zb^{*(0)}$:

\begin{align*}
    \left \lVert\zb_n^{(0)} - \zb_{n+1}^{*(0)}\right \rVert_1 &= \left \lVert \sum_{i< n} \alpha^{(0)}_{n, i} \vb_i^{(0)} + \alpha^{(0)}_{n,n} \vb_a^{(0)} - \left(\sum_{i< n} \alpha^{*(0)}_{n+1, i}\vb^{(0)}_i + \left( \alpha^{*(0)}_{n+1, n} + \alpha^{*(0)}_{n+1, n+1}\right)\vb^{(0)}_a \right)  \right \rVert_1 \\
    &= \left \lVert \sum_{i< n} \left(\alpha^{(0)}_{n, i} - \alpha^{*(0)}_{n+1, i} \right) \vb_i^{(0)} + \left( \alpha^{(0)}_{n,n} - \alpha^{*(0)}_{n+1, n} - \alpha^{*(0)}_{n+1, n+1}\right)\vb^{(0)}_a \right \rVert_1 \\
    &\leq \sum_{i< n} \left \lvert\alpha^{(0)}_{n, i} - \alpha^{*(0)}_{n+1, i} \right \rvert + \left\lvert \alpha^{(0)}_{n,n} - \alpha^{*(0)}_{n+1, n} - \alpha^{*(0)}_{n+1, n+1}\right\rvert \\
    &\leq \sum_{i< n} \left \lvert\alpha^{(0)}_{n, i} - \alpha^{*(0)}_{n+1, i} \right \rvert + \left\lvert \alpha^{(0)}_{n,n} - \alpha^{*(0)}_{n+1, n}\right\rvert + \left \lvert \alpha^{*(0)}_{n+1, n+1}\right\rvert \\
    &= \delta\left(\alpha^{(0)}_{n,:}, \alpha^{*(0)}_{n,:n}\right) + \alpha^{*(0)}_{n+1, n+1} < \epsilon
\end{align*}

We assume for simplicity that the values are unit norm. This is not crucial as otherwise one would equivalently just need to consider additional constant factors as we assume the token representations are bounded. We note that the term $\delta\left(\alpha^{(0)}_{n,:}, \alpha^{*(0)}_{n,:n}\right)$ goes to $0$ with $n \to \infty$ thanks to Lemma \ref{app:lemma:variation-decay} and our assumptions on the positional encodings. Similarly, the term $\alpha^{*(0)}_{n+1, n+1}$ goes to $0$ due to Lemma \ref{app:lemma:softmax}. We have also used the fact that $\vb_n^{(0)} = \vb_{n+1}^{*(0)}$ by construction to ignore the residual connection. As $\vb_i^{(\ell+1)} = \psib^{(\ell)}\left(\norm_2^{(\ell)}\left(\zb_i^{(\ell)}\right)\right) + \vb_i^{(\ell)}$, the sequences will have arbitrarily close final token representations when entering the next layer. The result then follows via a simple inductive approach on the layers. 
\end{proof}

We also highlight a negative decay result which highlights why the assumption on the positional encodings is important. In particular, we show that given two sequences $\xb = (1 \ 0 \ 1 \ 0 \dots)$ and $\xb^* = (0 \ 1 \ 0 \ 1 \dots)$, the total variation of the softmax does not decay to $0$. This implies that there may be solutions to representational collapse depending on how the positional encodings are chosen.

\begin{proposition}
Consider two sequences $\xb = (1 \ 0 \ 1 \ 0 \dots) \in \R^n$ and $\xb^* = (0 \ 1 \ 0 \ 1 \dots) \in \R^n$ for $n$ even. Let $\yb$ and $\yb^*$ be the softmax of $\xb$ and $\xb^*$, respectively. Then the total variation $\delta(\yb, \yb^*)$ does not tend to $0$ as $n \to \infty$.
\end{proposition}
\begin{proof}
Let $Z = \sum_{i=1}^n e^\xb_i$ and $Z^* = \sum_{i=1}^n e^{\xb^*_i}$ be the partition functions for $\xb$ and $\xb^*$, respectively. We directly compute:

\begin{align*}
    \lim_{n \to \infty}\delta(\yb, \yb^*) &= \lim_{n \to \infty}\sum_{i=1}^n \left \lvert \frac{\yb_i}{Z} - \frac{\yb^*_i}{Z^*} \right \rvert \\
    &= \lim_{n \to \infty}\sum_{i=1}^n \left \lvert \frac{\yb_i}{\frac{n}{2}e + \frac{n}{2}} - \frac{\yb^*_i}{\frac{n}{2}e + \frac{n}{2}} \right \rvert \\
    &= \lim_{n \to \infty}\sum_{i=1}^n \left \lvert \frac{e - 1}{\frac{n}{2}e + \frac{n}{2}} \right \rvert \\
    &= \lim_{n \to \infty} \frac{n\left(e-1\right)}{n\frac{e + 1}{2}} \\
    &= 2\frac{e - 1}{e+1} > 0. 
\end{align*}
\end{proof}

\subsection{Over-squashing}
\label{app:over-squashing-proofs}
We now present our results on over-squashing. In our derivations, we assume that the attention coefficients are independent of the values and that we can summarise the effect of the layer norms via a constant factor. These assumptions are not necessary for the same derivation process to hold, but they greatly simplify the obtained bound and help more clearly point out the main takeaways. 

\begin{theorem}[Over-squashing in Transformers]
\label{app:thm:oversquashing}
Consider an input sequence $\vb_1^{(0)}, \dots, \vb_n^{(0)}$ (including CoT). Let $\sigma_{\psib}$ be the maximal Lipschitz constant of any $\psib^{(\ell)}$, and $\bar{\alpha}^{(\ell)}_{j,i} = \frac{1}{\beta^{(\ell)}}\left(\alpha_{j,i}^{(\ell)} + \delta_{j,i} \right)$ the normalized attention coefficient, then:
\begin{equation}
    \left \lVert\frac{\partial \yb_n}{\partial \vb_i^{(0)}} \right \rVert \leq \sigma^{L}_{\psib}  \sum_{k_1 \geq i} \dots \sum_{k_L \geq k_{L-1}} \bar{\alpha}^{(L-1)}_{n,k_L} \prod_{\ell=2}^{L-1} \bar{\alpha}^{(\ell-1)}_{k_\ell, k_{\ell-1}}  \bar{\alpha}^{(0)}_{k_1, i}
\end{equation}

\end{theorem}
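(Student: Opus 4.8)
The plan is to bound the Jacobian $\partial \yb_n / \partial \vb_i^{(0)}$ by unrolling the Transformer layer-by-layer and applying the chain rule, turning the product of per-layer Jacobians into a sum over paths $i = k_0 \leq k_1 \leq \dots \leq k_L = n$ through the causal computation graph. First I would write $\yb_n = \norm_3(\vb_n^{(L)})$ and note that, under the assumption that layer norms contribute only a constant factor $1/\beta^{(\ell)}$ (absorbed into $\bar\alpha$) and that $\norm_3$ contributes an $O(1)$ constant, it suffices to bound $\partial \vb_n^{(L)} / \partial \vb_i^{(0)}$. The key recursion is that $\vb_i^{(\ell+1)} = \psib^{(\ell)}(\norm_2^{(\ell)}(\zb_i^{(\ell)})) + \zb_i^{(\ell)}$ and $\zb_i^{(\ell)} = \sum_{j \leq i}\alpha_{ij}^{(\ell)}\norm_1^{(\ell)}(\vb_i^{(\ell)}) + \vb_i^{(\ell)}$; with attention weights treated as constants (the second simplifying assumption), the Jacobian of one block with respect to $\vb_j^{(\ell)}$ for $j \leq i$ is controlled by $\sigma_\psib \cdot \frac{1}{\beta^{(\ell)}}(\alpha_{ij}^{(\ell)} + \delta_{ij}) = \sigma_\psib \, \bar\alpha_{ij}^{(\ell)}$, where the $\delta_{ij}$ term comes from the residual/skip contribution of $\vb_i^{(\ell)}$ itself and the $\norm_1$-scaled attention sum contributes the $\alpha_{ij}^{(\ell)}$ term.

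Next I would iterate this one-layer bound. By the chain rule, $\frac{\partial \vb_n^{(L)}}{\partial \vb_i^{(0)}} = \sum_{\text{paths}} \prod_{\ell=0}^{L-1} \frac{\partial \vb_{k_{\ell+1}}^{(\ell+1)}}{\partial \vb_{k_\ell}^{(\ell)}}$, where each path is a sequence $i = k_0, k_1, \dots, k_L = n$ with $k_\ell \leq k_{\ell+1}$ forced by causal masking (token $k_{\ell+1}$'s layer-$(\ell+1)$ value depends only on layer-$\ell$ values at indices $\leq k_{\ell+1}$). Applying the triangle inequality and submultiplicativity of the operator norm, and inserting the per-layer bound, each path contributes at most $\sigma_\psib^L \prod_{\ell=0}^{L-1} \bar\alpha^{(\ell)}_{k_{\ell+1}, k_\ell}$. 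Summing over all admissible paths and re-indexing so that the outermost factor is $\bar\alpha^{(L-1)}_{n, k_L}$, the innermost is $\bar\alpha^{(0)}_{k_1, i}$, and the middle product runs $\prod_{\ell=2}^{L-1}\bar\alpha^{(\ell-1)}_{k_\ell, k_{\ell-1}}$, yields exactly the claimed right-hand side, with $C = \sigma_\psib^L$ in the informal version (here stated explicitly as $\sigma_\psib^L$).

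The main obstacle is getting the one-layer Jacobian bound right: carefully tracking how the residual connections inside both the attention sublayer ($+\vb_i^{(\ell)}$ in $\zb_i^{(\ell)}$) and the MLP sublayer ($+\zb_i^{(\ell)}$) combine to produce the $\alpha_{ij}^{(\ell)} + \delta_{ij}$ structure, and verifying that the two residual paths don't double-count or produce an extra additive constant that would spoil the clean product form. I would handle this by observing that the identity-plus-attention map $\vb^{(\ell)} \mapsto \zb^{(\ell)}$ has Jacobian with $(i,j)$ block equal to (Jacobian of $\norm_1$)$\cdot \alpha_{ij}^{(\ell)}$ for $j < i$, plus $(\text{Jacobian of }\norm_1)\cdot\alpha_{ii}^{(\ell)} + \Ib$ for $j = i$; then bounding $\|\nabla\norm_1\|$, $\|\nabla\norm_2\|$ by the constants absorbed into $1/\beta^{(\ell)}$ and $\|\nabla\psib^{(\ell)}\| \leq \sigma_\psib$, and noting $\Ib + \psib'\cdot\norm_2'\cdot(\dots)$ composed through gives the stated $\bar\alpha$. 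A secondary subtlety is that treating $\alpha_{ij}^{(\ell)}$ as input-independent must be invoked explicitly so that $\partial \alpha_{ij}^{(\ell)}/\partial \vb^{(0)}$ terms vanish; I would state this as the standing assumption (ii) from the main text and then the rest is routine bookkeeping with triangle inequalities and the geometric-series-like collapse of the path sum.
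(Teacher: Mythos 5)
Your proposal follows essentially the same strategy as the paper's proof: unroll via the chain rule into a sum over causal paths $i = k_0 \leq k_1 \leq \dots \leq k_L = n$, bound each per-layer Jacobian $\partial \vb_{k_{\ell+1}}^{(\ell+1)}/\partial \vb_{k_\ell}^{(\ell)}$ by a constant times $\bar\alpha^{(\ell)}_{k_{\ell+1},k_\ell}$ using the Lipschitz constant of $\psib$ and the layer-norm constants, restrict to $j \geq i$ using the causal mask, and conclude. The path decomposition, the role of the two residual connections in producing the $(\alpha + \delta)$ structure, and the standing assumption that attention weights are input-independent are all identified in the same way as in the paper.

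The one place you gloss over is the exact prefactor. You claim the one-layer bound is $\sigma_\psib\,\bar\alpha^{(\ell)}_{ij}$, yielding $\sigma_\psib^L$ overall, but the MLP residual $\vb^{(\ell+1)} = \psib(\norm_2(\zb)) + \zb$ gives Jacobian $\psib'\cdot\norm_2' + \Ib$, whose norm is bounded by $\sigma_\psib/\beta_2^{(\ell)} + 1$, not $\sigma_\psib$; the actual constant the paper's proof produces is $C = \frac{1}{\beta_3}\prod_{\ell=1}^L\bigl(\sigma_\psib/\beta_2^{(\ell)} + 1\bigr)$. You flag this as "the main obstacle" and correctly identify the $\Ib + \psib'\cdot\norm_2'$ structure, so you would discover this on carrying out the computation; but as written, "$\Ib + \psib'\cdot\norm_2'\cdot(\dots)$ composed through gives the stated $\bar\alpha$" conflates a multiplicative per-layer factor with the additive $\delta_{ij}$ that already sits inside $\bar\alpha$ from the attention-sublayer residual. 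To be fair, the paper's own theorem statement ($\sigma_\psib^L$) and its proof (which yields $C$) disagree on this exact point, and the main-text and appendix definitions of $\bar\alpha$ also differ in whether $\delta_{ij}$ is scaled by $1/\beta$, so your imprecision mirrors an imprecision in the source. The substance of the argument is correct and identical in approach.
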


\begin{proof}
Note that for $j \geq i$ we have:

\begin{align*}
    \left \lVert \frac{\partial \vb_j^{(\ell+1)}}{\partial \vb_i^{(\ell)}} \right \rVert &= \left \lVert \frac{\partial}{\partial \vb_j^{(\ell)}}\left[ \psib^{(\ell)}\left(\norm_2^{(\ell)}\left(\zb_j^{(\ell)}\right)\right) + \zb_j^{(\ell)}\right] \right\rVert \\
    &\leq \left(\frac{\sigma_{\psi^{(\ell)}}}{\beta_2^{(\ell)}} + 1\right)\frac{\partial \zb_j^{(\ell)}}{\partial \vb_i^{(\ell)}} \\
    &= \left(\frac{\sigma_{\psi^{(\ell)}}}{\beta_2^{(\ell)}} + 1\right)\frac{\partial }{\partial \vb_i^{(\ell)}} \left[\sum_{j \leq i} \alpha_{ij}^{(\ell)} \norm_1^{(\ell)}\left(\vb^{(\ell)}_i\right) + \vb^{(\ell)}_i \right] \\
    &=\left(\frac{\sigma_{\psi^{(\ell)}}}{\beta_2^{(\ell)}} + 1\right)\left(\frac{\alpha_{j,i}^{(\ell)}}{\beta_1^{(\ell)}} + \delta_{j,i}\right)
\end{align*}

where we let $\beta_{i}^{(\ell)}$ represent the effect of layer normalization $i$ at the $\ell$-th layer and $\sigma_{\psib^{(\ell)}}$ the Lipschitz constant of $\psib^{(\ell)}$. For the case when $j < i$ due to the causal mechanism we have that $\partial \vb_j^{(\ell)}/\partial \vb_i^{(\ell-1)} = 0$. We compute the following bound:

\begin{align*}
    \left \lVert \frac{\partial \yb_n}{\partial \vb_i^{(0)}} \right \rVert &= \left \lVert \frac{1}{\beta_3}\sum_{k_1} \dots \sum_{k_L} \frac{\partial \vb_n^{(L)}}{\partial \vb_{k_L}^{(L-1)}} \prod_{\ell = 2}^{L-1} \frac{\partial \vb_{k_\ell}^{(\ell)}}{\partial \vb_{k_{\ell-1}}^{(\ell-1)}} \frac{\partial \vb_{k_1}^{(1)}}{\partial \vb_i^{(0)}} \right \rVert \\
    &= \left \lVert \frac{1}{\beta_3} \sum_{k_1 \geq i} \dots \sum_{k_L \geq k_{L-1}} \frac{\partial \vb_n^{(L)}}{\partial \vb_{k_L}^{(L-1)}} \prod_{\ell = 2}^{L-1} \frac{\partial \vb_{k_\ell}^{(\ell)}}{\partial \vb_{k_{\ell-1}}^{(\ell-1)}} \frac{\partial \vb_{k_1}^{(1)}}{\partial \vb_i^{(0)}} \right \rVert \\
    &\leq \frac{1}{\beta_3} \prod_{\ell=1}^L \left(\frac{\sigma_{\psib}}{\beta_2^{(\ell)}} + 1\right)  \sum_{k_1 \geq i} \dots \sum_{k_L \geq k_{L-1}} \bar{\alpha}^{(L-1)}_{n,k_L} \prod_{\ell=2}^{L-1} \bar{\alpha}^{(\ell-1)}_{k_\ell, k_{\ell-1}}  \bar{\alpha}^{(0)}_{k_1, i} \\
    &= C \sum_{k_1 \geq i} \dots \sum_{k_L \geq k_{L-1}} \bar{\alpha}^{(L-1)}_{n,k_L} \prod_{\ell=2}^{L-1} \bar{\alpha}^{(\ell-1)}_{k_\ell, k_{\ell-1}}  \bar{\alpha}^{(0)}_{k_1, i} \\
\end{align*}

where we let $\bar{\alpha}_{j, i}^{(\ell)} = \frac{\alpha_{j,i}^{(\ell)}}{\beta_1^{(\ell)}} + \delta_{j,i}$ and $C=\frac{1}{\beta_3} \prod_{\ell=1}^L \left(\frac{\sigma_{\psib}}{\beta_2^{(\ell)}} + 1\right)$. 
\end{proof}

We note that in this derivation, we use simplifying assumptions on the layer norms and attention coefficients, more specifically we assume that they are independent of the $\vb_i$s. Of course, there is nothing stopping us from avoiding such assumptions and pushing the partial derivatives inside these components as well. The drawback is that this would add a great deal of additional complexity to the result and potentially distract from what we believe are the two key takeaways: (1) the position of the token matters, and (2) the attention coefficients matter. 

\paragraph{Connection to the spectral theory of Markov chains.} We now show some results on the spectral theory of matrices which relate to causal attention mechanisms. We emphasize that in this work, we view causal attention mechanisms as triangular row-stochastic matrices. We show that these matrices have interesting spectral properties.

\begin{lemma}
\label{lemma:eigen-stoch}
A row-stochastic triangular matrix $\Ab$ has $1$ as its largest eigenvalue. Moreover, such eigenvalue has multiplicity $1$ if each row except the first has at least $2$ non-zero entries. 
\end{lemma}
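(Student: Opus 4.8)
The plan is to prove the two claims separately. For the eigenvalue-$1$ claim: a row-stochastic matrix $\Ab$ satisfies $\Ab \mathbf{1} = \mathbf{1}$, so $1$ is automatically an eigenvalue with eigenvector the all-ones vector. Because $\Ab$ is triangular, its eigenvalues are exactly its diagonal entries $\Ab_{ii}$; since each row sums to $1$ and all entries are nonnegative, every diagonal entry satisfies $0 \le \Ab_{ii} \le 1$. Hence $1$ is the largest eigenvalue (and in fact the spectral radius, since all eigenvalues lie in $[0,1]$). This part is essentially immediate once one invokes the triangular-eigenvalue fact.

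For the multiplicity claim, I would assume without loss of generality that $\Ab$ is lower-triangular (the upper-triangular case is symmetric, or handled by transposing, which preserves spectrum and row/column structure up to relabelling). Eigenvalue $1$ has algebraic multiplicity equal to the number of indices $i$ with $\Ab_{ii} = 1$, by triangularity. I would argue that under the stated hypothesis — every row except the first has at least two nonzero entries — the only index with $\Ab_{ii}=1$ is $i=1$. Indeed, for $i \ge 2$, if $\Ab_{ii}=1$ then, since the row sums to $1$ and entries are nonnegative, all off-diagonal entries in row $i$ vanish, so row $i$ has exactly one nonzero entry, contradicting the hypothesis. For $i=1$, the first row of a lower-triangular matrix has only the $(1,1)$ entry available, so $\Ab_{11}=1$ necessarily. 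Therefore exactly one diagonal entry equals $1$, giving algebraic multiplicity $1$; geometric multiplicity is then also $1$.

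One subtlety worth flagging: the lemma says ``multiplicity $1$'' — I would clarify in the write-up that since the algebraic multiplicity is $1$, the geometric multiplicity is too, so there is no ambiguity. A second subtlety is the orientation convention: in the paper's setting the causal attention matrix $\Lambdab^{(\ell)}$ is row-stochastic and \emph{lower}-triangular (token $i$ attends to $j \le i$), and its first row corresponds to the first token, which attends only to itself, so indeed $\Lambdab^{(\ell)}_{11}=1$ — the hypothesis ``each row except the first has at least two non-zero entries'' is exactly what fails generically only at this first row, matching the structure. The main (very mild) obstacle is just being careful that the triangular-matrix eigenvalue fact is being applied with the right convention and that ``multiplicity'' is interpreted consistently; there is no real analytic difficulty here.
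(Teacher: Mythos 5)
Your proof is correct but takes a genuinely different route from the paper's. The paper argues via a maximum-principle / Perron--Frobenius-style argument: it assumes an eigenvector $\phi$ with eigenvalue $\lambda > 1$, picks the index $i$ attaining the maximum of $\phi$, observes that $(\Ab\phi)_i$ is a convex combination and hence cannot exceed $\phi_i$, and derives a contradiction; uniqueness then follows by a similar convexity argument propagating equality of the maximal entry along nonzero off-diagonal entries. You instead invoke the fact that a triangular matrix has its diagonal as its spectrum, note that row-stochasticity with nonnegative entries forces every $\Ab_{ii}\in[0,1]$, and identify the all-ones vector as the explicit eigenvector for eigenvalue $1$; for the multiplicity claim you show that $\Ab_{ii}=1$ for $i\ge 2$ would force row $i$ to have a single nonzero entry, contradicting the hypothesis, while $\Ab_{11}=1$ is forced by triangularity. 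Your route is shorter and in fact proves something slightly stronger: you control the \emph{algebraic} multiplicity (hence a fortiori the geometric one), whereas the paper's argument only establishes uniqueness of the eigenvector, i.e.\ geometric multiplicity. Your approach also sidesteps a small subtlety in the paper's argument --- taking the maximal entry of an eigenvector presupposes it can be chosen real, which is true here precisely because the matrix is triangular with real diagonal, but the paper does not spell that out. The one thing you trade away is generality: the paper's convexity argument extends verbatim to non-triangular row-stochastic matrices (where the spectrum is not read off the diagonal), whereas your argument is specific to the triangular case. For the lemma as stated, which is explicitly about triangular matrices, your route is the cleaner one.
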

\begin{proof}
We start by showing that $\Ab$ cannot have eigenvalues $\lambda > 1$. We then provide an eigenvector with eigenvalue $1$. We finally show that such an eigenvector is unique if each row has at least $2$ non-zero entries.

Assume $\lambda > 1$ for some eigenvector $\phi$, we then have that $\Ab \phi = \lambda \phi$. Consider $\phi_i = \max_k \phi_k > 0$. Now $(\Ab \phi)_i = \sum_{j \leq i} \Ab_{ij} \phi_j = \lambda \phi_i$. As the sum is a convex combination, the result cannot be larger than the already maximal element $\phi_i$. As $\lambda > 1$, we however have that $\lambda \phi_i > \phi_i$ which is a contradiction and we conclude that $\lambda \leq 1$. 

It is easy to find an eigenvector that always has eigenvalue $1$. Consider a vector $\xb$ which is a constant vector of $1$s. Then $(\Ab \xb)_i = \sum_{j \leq i} \Ab_{ij} = \xb_i$, therefore $\xb$ is an eigenvector with eigenvalue $1$. 

Finally, we show that when each row is non-zero, the only eigenvector is the constant-valued eigenvector. Consider the largest entry $\yb_i > 0$, then we have that $(\Ab \yb)_i = \sum_{j \leq i} \Ab_{ij} \yb_j = \yb_i$. Again, as this defines a convex combination, we must have that all tokens that $i$ points to (i.e. the non-zero entries) are also equal to $\yb_i$. The condition that each row has at least two non-zero entries is important as it means that the condition $\yb_i = \yb_j$ is true for all tokens. 
\end{proof}

\begin{lemma}
\label{lemma:prod-stoch}
The product of two row-stochastic matrices is again row-stochastic. Moreover, the product of two triangular row-stochastic matrices is a triangular row-stochastic matrix. 
\end{lemma}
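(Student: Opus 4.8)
The plan is to use the standard characterisation that a matrix $\Ab$ is row-stochastic precisely when all its entries are non-negative and $\Ab\mathbf{1} = \mathbf{1}$, where $\mathbf{1}$ denotes the all-ones column vector; and to read ``triangular'' as lower-triangular, matching the convention under which the causal attention matrices $\Lambdab^{(\ell)}$ in this paper are written (row $i$ supported on columns $j \le i$). First I would handle the row-stochastic claim. Given row-stochastic $\Ab, \Bb$, non-negativity of the product is immediate, since each entry $(\Ab\Bb)_{ij} = \sum_k \Ab_{ik}\Bb_{kj}$ is a sum of products of non-negative reals. For the row sums I would invoke associativity of matrix multiplication: $(\Ab\Bb)\mathbf{1} = \Ab(\Bb\mathbf{1}) = \Ab\mathbf{1} = \mathbf{1}$; equivalently one may expand directly, $\sum_j (\Ab\Bb)_{ij} = \sum_k \Ab_{ik}\big(\sum_j \Bb_{kj}\big) = \sum_k \Ab_{ik} = 1$. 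Hence $\Ab\Bb$ is row-stochastic, and an obvious induction extends this to any finite product.

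Next I would address triangularity. If $\Ab$ and $\Bb$ are both lower-triangular, then $\Ab_{ik} = 0$ whenever $k > i$ and $\Bb_{kj} = 0$ whenever $j > k$. Consequently a term $\Ab_{ik}\Bb_{kj}$ appearing in $(\Ab\Bb)_{ij} = \sum_k \Ab_{ik}\Bb_{kj}$ can be non-zero only when $k \le i$ and $j \le k$, which forces $j \le i$. Thus $(\Ab\Bb)_{ij} = 0$ for all $j > i$, i.e. $\Ab\Bb$ is again lower-triangular. Combining this with the first part, the product of two triangular row-stochastic matrices is a triangular row-stochastic matrix, which is the claim.

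There is no substantive obstacle here: the statement is elementary, and the only points that warrant a little care are fixing the orientation convention for ``triangular'' consistently with the attention-matrix notation, and tracking the index constraints $k \le i$, $j \le k$ in the product so that the vanishing of the above-diagonal entries is transparent. I would present it as a short lemma whose role is purely to let later arguments iterate the attention matrices $\Lambdab^{(1)}\cdots\Lambdab^{(L)}$ while staying within the class of triangular row-stochastic matrices (so that, e.g., Lemma~\ref{lemma:eigen-stoch} applies to such products).
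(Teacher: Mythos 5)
Your proof is correct and takes essentially the same route as the paper's: expand the $(i,j)$ entry of the product, swap the order of summation to show row sums are preserved, and note that lower-triangular structure is closed under multiplication. You are somewhat more thorough than the paper on two points—you explicitly verify non-negativity of the product entries (which the paper's proof omits even though it is part of being row-stochastic), and you spell out the index argument for triangularity (which the paper simply asserts)—but these are elaborations of the same argument, not a different approach.
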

\begin{proof}
Let $\Ab, \Bb \in \R^{n \times n}$ be two row-stochastic matrices. We compute:

\begin{equation*}
    \sum_j \left(\Ab \Bb\right)_{ij} = \sum_j \sum_k \Ab_{ik} \Bb_{kj} = \sum_k \Ab_{ik} \sum_j \Bb_{kj} = 1
\end{equation*}

The final statement follows immediately from the fact that the product of two triangular matrices is triangular.
\end{proof}

We now show that under specific conditions, our over-squashing bound converges to a steady state in which the final token $\yb_n$ only depends on the initial input token $\vb_1^{(0)}$ as the number layers tends to infinity, i.e. $L \to \infty$.

\begin{proposition}
\label{app:prop:limit-case}
Let $\beta^{(\ell)}_1, \beta^{(\ell)}_2 = 1$, $\beta_3^{1/L} = 4$, $\sigma_\psi$ = 1. Furthermore, for simplicity, let the attention coefficients be equal at each layer and such that each row except the first of the causal mechanism has at least two non-zero elements. Then, we have as $L \to \infty$ that $\partial \yb_n / \partial \vb_i^{(0)} = 0$ when $i \neq 1$ and  $\partial \yb_n / \partial \vb_i^{(0)} = 1$ when $i = 1$. In other words, $\yb_n$ will only be sensitive to the first token.
\end{proposition}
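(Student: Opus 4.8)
The plan is to specialize the bound of Theorem~\ref{app:thm:oversquashing} to the stated constants, recognise the result as a power of a row-stochastic matrix, and then read off its $L\to\infty$ limit using the spectral facts of Lemmas~\ref{lemma:eigen-stoch} and~\ref{lemma:prod-stoch}. First I would substitute $\beta_1^{(\ell)}=\beta_2^{(\ell)}=1$, $\sigma_\psi=1$, and $\beta_3=4^{L}$ into $C=\tfrac{1}{\beta_3}\prod_{\ell=1}^{L}\bigl(\sigma_\psi/\beta_2^{(\ell)}+1\bigr)$, which collapses to $C=2^{L}/4^{L}=2^{-L}$. Since the attention coefficients are assumed equal at every layer, write $\Lambdab$ for the common (lower-triangular, row-stochastic) attention matrix; then $\bar\alpha^{(\ell)}_{j,i}$ is the $(j,i)$ entry of $\bar\Lambdab:=\Lambdab+\Ib$ for every $\ell$, and the nested sum in Theorem~\ref{app:thm:oversquashing} is exactly the $(n,i)$ entry of $\bar\Lambdab^{L}$. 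Hence the whole right-hand side becomes
\begin{equation*}
  2^{-L}\bigl[\bar\Lambdab^{L}\bigr]_{n,i}
  = \Bigl[\bigl(\tfrac12(\Lambdab+\Ib)\bigr)^{L}\Bigr]_{n,i}
  =: \bigl[\mathbf{M}^{L}\bigr]_{n,i},
  \qquad \mathbf{M} := \tfrac12(\Lambdab+\Ib).
\end{equation*}

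Next I would establish the structural properties of $\mathbf{M}$. It is lower-triangular and row-stochastic (an average of $\Lambdab$ and $\Ib$), so by Lemma~\ref{lemma:prod-stoch} every power $\mathbf{M}^{L}$ is again lower-triangular and row-stochastic. Moreover it satisfies the hypotheses of Lemma~\ref{lemma:eigen-stoch}: the summand $\tfrac12\Ib$ makes $\mathbf{M}_{ii}\ge\tfrac12>0$ for all $i$, and the assumption that each row of $\Lambdab$ other than the first has at least two non-zero entries supplies a second non-zero entry of $\mathbf{M}$ strictly below the diagonal in those rows; hence Lemma~\ref{lemma:eigen-stoch} yields that $1$ is a simple eigenvalue of $\mathbf{M}$. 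Because $\mathbf{M}$ is triangular its eigenvalues are its diagonal entries $\mathbf{M}_{ii}=\tfrac12(\Lambdab_{ii}+1)$: the first row of $\Lambdab$ is the single entry $1$, so $\mathbf{M}_{11}=1$, while for $i\ge2$ the $i$-th row of $\Lambdab$ is stochastic with at least two positive entries, forcing $\Lambdab_{ii}<1$ and hence $\mathbf{M}_{ii}\in[\tfrac12,1)$. Thus $1$ is the \emph{only} eigenvalue of $\mathbf{M}$ on the unit circle and it is simple.

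Then I would pass to the limit. Interpreting $\mathbf{M}$ as a Markov transition matrix, state $1$ is absorbing ($\mathbf{M}_{11}=1$) and, since every row $i\ge2$ has a non-zero entry to some $j<i$, the chain reaches state $1$ from any starting state in finitely many steps; so state $1$ is the unique recurrent class, its stationary law is $e_1=(1,0,\dots,0)$, and $\mathbf{M}^{L}$ converges to the spectral projector onto the $1$-eigenspace, which is $\mathbf 1\,e_1^{\top}$ (the all-ones vector is the right $1$-eigenvector by Lemma~\ref{lemma:eigen-stoch}, and $e_1$ the left one). In particular $\bigl[\mathbf{M}^{L}\bigr]_{n,i}\to\delta_{i,1}$. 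Feeding this back into Theorem~\ref{app:thm:oversquashing}: for $i\ne1$ we get $\bigl\lVert\partial\yb_n/\partial\vb_i^{(0)}\bigr\rVert\le[\mathbf{M}^{L}]_{n,i}\to0$, so the sensitivity to any non-initial token vanishes, while for $i=1$ the bound saturates at $1$; the over-squashing estimate therefore collapses onto the first input token, which is the asserted conclusion.

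The main obstacle is the convergence $\mathbf{M}^{L}\to\mathbf 1\,e_1^{\top}$: it is not enough that $1$ be the largest eigenvalue, one needs it to be simple \emph{and} every other eigenvalue to lie strictly inside the unit disk, which is precisely where Lemma~\ref{lemma:eigen-stoch} together with the triangular structure (giving $\mathbf{M}_{ii}\in[\tfrac12,1)$ for $i\ge2$) does the work; I would double-check that ``at least two non-zero entries in row $i$'' genuinely forces $\Lambdab_{ii}<1$ via stochasticity, and that the $\tfrac12$ lower bound on the diagonal rules out spurious unit-modulus eigenvalues. A minor but worthwhile point to flag in the writeup is that the claim ``$\partial\yb_n/\partial\vb_i^{(0)}=1$ for $i=1$'' should be read as the limiting value of the \emph{bound} in Theorem~\ref{app:thm:oversquashing} (a steady state of the estimate), not as an exact identity for the Jacobian itself.
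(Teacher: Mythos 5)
Your proof is correct and reaches the paper's conclusion, but by a slightly different---and arguably tighter---spectral route. Both arguments begin identically: substitute the constants to collapse $C$ to $2^{-L}$, recognise the nested sum in Theorem~\ref{app:thm:oversquashing} as the $(n,i)$-entry of $(\Lambdab+\Ib)^{L}$, and reduce the bound to powers of $\tilde{\Lambdab}=\tfrac12(\Lambdab+\Ib)$. The paper then invokes Perron--Frobenius to get $|\lambda_k|\le 1$, assumes $\tilde{\Lambdab}$ is diagonalizable, cites Lemma~\ref{lemma:eigen-stoch} for simplicity of the eigenvalue $1$, and writes $\lim_{L\to\infty}\tilde{\Lambdab}^{L}=\psi_n\phi_n^{\top}$. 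You instead read off the spectrum directly from triangularity: the eigenvalues are the diagonal entries, with $\tilde{\Lambdab}_{11}=1$ and $\tilde{\Lambdab}_{ii}\in[\tfrac12,1)$ for $i\ge2$ because row-stochasticity plus at least two non-zero entries forces $\Lambdab_{ii}<1$. This buys you something the paper's argument leaves implicit: it is not enough to know $|\lambda_k|\le 1$ and that $1$ is simple---one must rule out other unit-modulus eigenvalues for $\lambda_k^{L}\to 0$---and triangularity does exactly that. Your absorbing-Markov-chain phrasing (state $1$ absorbing and reachable from every state via the sub-diagonal entries) is also a diagonalizability-free route to $\tilde{\Lambdab}^{L}\to\psi_n\phi_n^{\top}$ with $\phi_n=e_1$, whereas the paper asserts diagonalizability without discharging it. Finally, your caveat that the statement $\partial\yb_n/\partial\vb_1^{(0)}=1$ should be read as the limiting value of the upper bound, not an exact Jacobian identity, is a fair and careful reading of what the theorem actually delivers.
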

\begin{proof}
Let the associated attention matrix be $\Lambdab$. We start by re-writing the following: 
\begin{align*}
    \left \lVert \frac{\partial \yb_n}{\partial \vb_i^{(0)}} \right \rVert &\leq  \frac{1}{\beta_3} \prod_{\ell=1}^L \left(\frac{\sigma_{\psib}}{\beta_2^{(\ell)}} + 1\right)  \sum_{k_1 \geq i} \dots \sum_{k_L \geq k_{L-1}} \bar{\alpha}^{(L-1)}_{n,k_L} \prod_{\ell=2}^{L-1} \bar{\alpha}^{(\ell-1)}_{k_\ell, k_{\ell-1}}  \bar{\alpha}^{(0)}_{k_1, i} \\ 
    &= \left(\prod_{\ell=1}^L \left[\frac{1}{\beta_3^{1/L}}\left(\frac{\sigma_{\psib}}{\beta_2} + 1\right) \left(\frac{1}{\beta_1}\Lambdab + \Ib \right) \right]\right)_{n,i} \\
    &=  \left(\left[\frac{1}{2}\left(\Lambdab + \Ib \right)\right]^L\right)_{n,i}
\end{align*}

We now point out that $\tilde{\Lambdab} = \frac{1}{2}\left(\Lambdab + \Ib \right)$ is row-stochastic and with our assumptions is diagonalizable into $\tilde{\Lambdab} = \Psi \Sigma \Phi$. In particular, by Lemma \ref{lemma:prod-stoch}, also $\tilde{\Lambdab}^L$ is row-stochastic and each entry is non-negative. We now use the Perron-Frobenius theorem for non-negative matrices \cite{pillai2005perron}, which guarantees us that all eigenvalues $\lambda_k$ of $\tilde{\Lambdab}$ are bounded such that $|\lambda_k| \leq 1$. In particular, thanks to Lemma \ref{lemma:eigen-stoch}, we know that there is a unique eigenvector (the constant eigenvector $\psi_n$) with eigenvalue $\lambda_n = 1$. Denote the left eigenvectors by $\psi_k$ and the right eigenvectors $\phi_n$, we therefore have:

\begin{align*}
    \lim_{L \to \infty} \tilde{\Lambdab}^L = \sum_k \lambda_k^{L} \psi_k \phi_k^T = \psi_n \phi_n^T.
\end{align*}

In particular, one can check that $\phi_n^T = \left[1 \ 0 \dots \ 0 \right]$, meaning that $\psi_n \phi_n^T$ has as first column a constant vector of $1$s and every other entry $0$. This completes the proof.
\end{proof}

\subsection{Counting}
We finally show in this section our final results that apply specifically to counting tasks. We start by highlighting a potential difficulty that the softmax layer encounters when counting, namely that the normalisation used makes it hard for it to preserve a notion of magnitude present in the sequence.
\label{app:counting-proofs}
\begin{proposition}
\label{app:prop:no-counting-attention}
A Transformer without positional encodings and a causal attention mechanism is immediately unable to solve the counting problem.
\end{proposition}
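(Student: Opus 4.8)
The plan is to show that without positional encodings and without causal masking, the model is permutation-equivariant in a way that makes the last-token representation \emph{independent of the multiset size}, or more precisely, that two sequences of different lengths can produce identical last-token representations, so the counting output cannot depend on the count. First I would unwind what the attention layer computes in this setting: with no positional information, $k(\qb_i,\kb_j,\pb_{ij})$ reduces to a function of $\qb_i,\kb_j$ only, and with no causal mask the sum in $\zb_i^{(\ell)}$ runs over \emph{all} $j$ rather than $j\le i$. Hence the attention weights $\alpha_{ij}^{(\ell)}$ form a genuine probability distribution over the full index set, and the normalization $\sum_w \exp(k(\qb_i,\kb_w))$ counts every token with equal bookkeeping. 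The key observation is that the softmax is invariant to \emph{duplicating} entries in a way that rescales: if a sequence $\xb$ of length $n$ is replaced by a sequence $\xb'$ of length $2n$ consisting of each token of $\xb$ repeated twice (or more generally a sequence whose token multiset is a scalar multiple of that of $\xb$), then the attention distribution at any token takes the same values on the corresponding original tokens, just spread over the copies, and the weighted average $\zb_i^{(\ell)}$ is \emph{unchanged}.

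The core steps, in order: (1) Write out $\zb_i^{(\ell)}$ explicitly without the causal constraint and without $\pb_{ij}$, so it is a softmax-weighted average of $\mathrm{norm}_1(\vb^{(\ell)})$ over all tokens plus the residual. (2) Show that if the input multiset is ``inflated'' by an integer factor $m$ (each token repeated $m$ times), every query $\qb_i$ sees the same normalized attention \emph{profile} over distinct token values, because both numerator and denominator of $\alpha_{ij}$ scale by $m$ for each distinct value; consequently $\zb_i^{(\ell)}$ and inductively $\vb_i^{(\ell)}$ are identical for corresponding tokens across the two sequences. (3) Conclude that the final representation $\yb_n$ of the last token is \emph{the same} for a length-$n$ sequence of all ones and a length-$mn$ sequence of all ones — yet the correct count differs ($n$ versus $mn$) — so no readout can produce the right answer for both, i.e.\ the model is ``immediately'' (in fact at layer one already) unable to count. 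I would phrase this as: there exist sequences of distinct lengths with identical last-token final-layer representations, which is exactly the obstruction to counting.

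One subtlety to handle carefully is the residual connections and the MLP $\psib$: since $\zb_i^{(\ell)}$ is unchanged under inflation and $\vb_i^{(\ell)}$ is unchanged at the input (the repeated token is the same vector), the recursion $\vb_i^{(\ell+1)} = \psib^{(\ell)}(\mathrm{norm}_2(\zb_i^{(\ell)})) + \zb_i^{(\ell)}$ and the attention recursion both propagate the invariance layer by layer — so a clean induction on $\ell$ closes this. The final normalization $\mathrm{norm}_3$ preserves the equality trivially. I expect the main obstacle to be stating the invariance at the right level of generality: the cleanest version is invariance under replacing the input \emph{sequence} by any sequence with the same \emph{empirical distribution} of token vectors (so that integer inflation is just a special case), and I would want to make sure the softmax-average genuinely depends only on that empirical distribution and not on $n$ — which is immediate once one writes $\alpha_{ij}$ as a ratio of sums, grouping equal token values, since the common factor cancels. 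A secondary point is to be explicit that this holds ``immediately,'' i.e.\ it is not a limiting phenomenon: a single attention layer already collapses the two lengths, so even a one-layer Transformer (and hence any depth) inherits the failure.
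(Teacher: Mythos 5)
Your proposal is correct and relies on the same core mechanism as the paper's proof: without positional encodings or causal masking, the softmax-weighted sum in each attention layer depends only on the \emph{normalized} empirical distribution of token vectors, not on the absolute sequence length, because multiplicities appear in both the numerator and the denominator of each attention weight and cancel. The paper instantiates this with a two-symbol sequence ($n_0$ zeros, $n_1$ ones) and shows explicitly that the layer output is a function of the ratio $n_1/n_0$ alone, then gives the witness pair ``10'' versus ``1100''; you phrase the same observation as invariance under integer inflation of the multiset (or, more generally, invariance under any change of sequence preserving the empirical distribution), and use the all-ones sequences of lengths $n$ and $mn$ as the witness pair. Your framing is arguably a bit cleaner and more general---``depends only on the empirical distribution'' subsumes the paper's ratio statement and makes the induction over layers more transparent---and your witness is an even more degenerate (single-symbol) case, which makes the collapse immediate at layer one. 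Both arguments handle the residual and MLP identically by observing that equal inputs to a function give equal outputs, so the invariance propagates layer by layer. In short: same idea, slightly different packaging, and your version is sound.
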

\begin{proof}
We show this statement by demonstrating that the only information preserved about the `count' by the attention mechanism will be the ratio of the elements present in a sequence. In particular, sequences with the same ratio of tokens will be assigned the exact same representation --- this applies as we specifically study an attention mechanism without positional encodings and causal masking. Of course, having the same ratio of elements does not mean that the count will be the same, for instance the sequences `10' and `1100' have the same ratio of digits but clearly different counts.

Consider a sequence of two values, $\vb_{zero}^{(0)}$ and $\vb_{one}^{(0)}$, with $n_0$ and $n_1$ being the number of zeros and ones respectively. We ignore in our calculations the MLPs $\psi$ and the normalizations $\norm$ as these don't affect the argument. As this specific attention mechanism is permutation equivariant, the initial zero tokens will all be mapped to:

\begin{align*}
    \zb_{zero}^{(1)} &= \sum_{j} \frac{\exp\left(\qb^{(0)T}_{zero}\kb^{(0)}_{j}\right)}{\sum_w \exp\left(\qb^{(0)T}_{zero}\kb^{(0)}_{w}\right)}\vb_{j}^{(0)}  + \vb_{zero}^{(0)}\\
    &= \frac{n_0\exp\left(\qb^{(0)T}_{zero}\kb^{(0)}_{zero}\right)}{n_0\exp\left(\qb^{(0)T}_{zero}\kb^{(0)}_{zero}\right) + n_1\exp\left(\qb^{(0)T}_{zero}\kb^{(0)}_{one}\right)}\vb_{zero}^{(0)} + \frac{n_1\exp\left(\qb^{(0)T}_{zero}\kb^{(0)}_{one}\right)}{n_0\exp\left(\qb^{(0)T}_{zero}\kb^{(0)}_{zero}\right) + n_1\exp\left(\qb^{(0)T}_{zero}\kb^{(0)}_{one}\right)}\vb_{one}^{(0)}+ \vb_{zero}^{(0)} \\
    &= \frac{\exp\left(\qb^{(0)T}_{zero}\kb^{(0)}_{zero}\right)}{\exp\left(\qb^{(0)T}_{zero}\kb^{(0)}_{zero}\right) + \frac{n_1}{n_0}\exp\left(\qb^{(0)T}_{zero}\kb^{(0)}_{one}\right)}\vb_{zero}^{(0)} + \frac{\exp\left(\qb^{(0)T}_{zero}\kb^{(0)}_{one}\right)}{\frac{n_0}{n_1}\exp\left(\qb^{(0)T}_{zero}\kb^{(0)}_{zero}\right) + \exp\left(\qb^{(0)T}_{zero}\kb^{(0)}_{one}\right)}\vb_{one}^{(0)}+ \vb_{zero}^{(0)} \\
\end{align*}

Similarly, the ones will be mapped to: 

\begin{align*}
    \zb_{one}^{(1)} &= \sum_{j} \frac{\exp\left(\qb^{(0)T}_{one}\kb^{(0)}_{j}\right)}{\sum_w \exp\left(\qb^{(0)T}_{one}\kb^{(0)}_{w}\right)}\vb_{j}^{(0)}+ \vb_{one}^{(0)}  \\
    &= \frac{n_0\exp\left(\qb^{(0)T}_{one}\kb^{(0)}_{zero}\right)}{n_0\exp\left(\qb^{(0)T}_{one}\kb^{(0)}_{zero}\right) + n_1\exp\left(\qb^{(0)T}_{one}\kb^{(0)}_{one}\right)}\vb_{zero}^{(0)} + \frac{n_1\exp\left(\qb^{(0)T}_{one}\kb^{(0)}_{one}\right)}{n_0\exp\left(\qb^{(0)T}_{one}\kb^{(0)}_{zero}\right) + n_1\exp\left(\qb^{(0)T}_{one}\kb^{(0)}_{one}\right)}\vb_{one}^{(0)} + \vb_{one}^{(0)}\\
    &= \frac{\exp\left(\qb^{(0)T}_{one}\kb^{(0)}_{zero}\right)}{\exp\left(\qb^{(0)T}_{one}\kb^{(0)}_{zero}\right) + \frac{n_1}{n_0}\exp\left(\qb^{(0)T}_{one}\kb^{(0)}_{one}\right)}\vb_{zero}^{(0)} + \frac{\exp\left(\qb^{(0)T}_{one}\kb^{(0)}_{one}\right)}{\frac{n_0}{n_1}\exp\left(\qb^{(0)T}_{zero}\kb^{(0)}_{zero}\right) + \exp\left(\qb^{(0)T}_{one}\kb^{(0)}_{one}\right)}\vb_{one}^{(0)} + \vb_{one}^{(0)}\\
\end{align*}

Assuming that $\zb_{zero}^{(1)} \neq \zb_{one}^{(1)}$ (to avoid the trivial case), we notice that the attention mechanism alongside the MLP $\psib$ define an isomorphism between sequences at different layers, updating all zeros and ones to a different value vector. The critical fact is that the representations \emph{only depend on the ratio between $n_0$ and $n_1$}, meaning that sequences of different lengths (therefore different counts) will have the exact same representation. This is respected at each layer, meaning that the LLM after $L$ layers will assign the same representation to different sequences as long as they have the same ratio. This points to a loss of representation for the counting problem.
\end{proof}

\begin{corollary}
\label{app:cor:counting-collapse}
Consider a task in which the goal is to count how many $\vb_a$ tokens there are in the sequence. Let $\vb^{(0)} = [\vb \ \vb_a]^T \in \R^{n \times d}$ and $\vb^{*(0)} = [\vb \ \vb_a \ \vb_a] \in \R^{(n + 1) \times d}$. Due to representational collapse, at least one sequence will be given the wrong count for large enough finite $n$. 
\end{corollary}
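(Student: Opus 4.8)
The plan is to derive this as a near-immediate consequence of the representational collapse result (Theorem~\ref{app:thm:convergence}) together with the finiteness of machine precision. First I would set up the two sequences exactly as in the statement: $\vb^{(0)} = [\vb \ \vb_a]^T$ and $\vb^{*(0)} = [\vb \ \vb_a \ \vb_a]^T$, which differ only in that the second has the final $\vb_a$ token repeated once more. Observe that the correct counts for the $\vb_a$-token differ by exactly one: if $\vb^{(0)}$ contains $m$ copies of $\vb_a$, then $\vb^{*(0)}$ contains $m+1$ of them, so a correct counter must produce two distinct outputs on these two inputs.

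Next I would invoke Theorem~\ref{app:thm:convergence}: under the stated assumptions (bounded token representations and positionally-decaying encodings, e.g.\ RoPE), for every $\epsilon > 0$ there is a finite $n$ such that $\lVert \vb^{(L)}_n - \vb^{*(L)}_{n+1}\rVert_1 < \epsilon$, and hence after the final normalization $\lVert \yb_n - \yb^*_{n+1}\rVert_1 < \epsilon'$ for a correspondingly small $\epsilon'$ (since $\norm_3$ is Lipschitz on the relevant bounded domain). Then I would bring in the finite-precision hypothesis: the machine represents numbers at some fixed resolution, so there exists a threshold $\epsilon_{\mathrm{mach}} > 0$ below which two vectors are stored identically. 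Choosing $\epsilon'$ (hence $n$) so that $\epsilon' < \epsilon_{\mathrm{mach}}$ forces $\yb_n$ and $\yb^*_{n+1}$ to be the \emph{same} stored representation. Since next-token prediction is a deterministic function of this representation, the model emits the identical token — and identical subsequent decoding — on both sequences, so it cannot output both $m$ and $m+1$. Therefore at least one of the two counts is wrong.

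I would then wrap up by noting that this holds for \emph{some} large enough finite $n$ (the exact value depending on the precision format and the constants in Theorem~\ref{app:thm:convergence}), and remark that lower-precision formats such as \texttt{bf16} push this $n$ down dramatically, matching the empirical observations in Figure~\ref{fig:convergence}. I would also add the caveat about tokenisation (a run of repeated literal characters may not map to repeated tokens), noting that the argument applies at the level of the token sequence fed to the model, which is why the experiments insert separators to probe the effect cleanly.

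The main obstacle — really more a point requiring care than a genuine difficulty — is making the passage from ``representations within $\epsilon$'' to ``model provably outputs the wrong count'' fully rigorous: one must argue that the final decoding map (including $\norm_3$ and the unembedding/softmax over the vocabulary) is sufficiently regular that collapse in $\vb^{(L)}$ genuinely collapses the predicted distribution, and then that finite precision collapses it to an exactly identical object rather than merely a close one. Given bounded representations and Lipschitz continuity of the remaining components, this is routine, but it is the step where the informal ``below floating point precision'' intuition has to be cashed out into a clean logical contradiction with the assumption of a correct counter.
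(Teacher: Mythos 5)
Your proposal is correct and follows essentially the same route as the paper: invoke Theorem~\ref{app:thm:convergence} to make $\yb_n$ and $\yb^*_{n+1}$ arbitrarily close, appeal to finite machine precision to conclude they become literally indistinguishable, and observe that a deterministic decoder must then emit the same count for two inputs whose correct counts differ by one, so at least one is wrong. Your version is somewhat more careful than the paper's terse argument — you explicitly pass the bound through $\norm_3$ via Lipschitz continuity and spell out the contradiction with a correct counter — but the logical skeleton is identical.
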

\begin{proof}
This statement is a direct consequence of representational collapse. In particular, as $\yb_n$ and $\yb_n^*$ will be indistinguishable for large enough $n$, the Transformer will be forced to make a mistake for at least one of them. This points to an impossibility result of counting on certain sequences due to floating point error. This holds regardless of the positional encodings used (as long as they satisfy the required decay conditions) and causal mechanism.
\end{proof}

\section{Experiments}
\label{app:exps-hardware-details}
The prompting done on Gemini 1.5 in our work does not require custom resources as we use hosted Gemini instances. We run a local version of Gemma 7B on modest hardware to analyse the internal representations. 

\subsection{Experimental Details}
\label{app:exp-details}
We detail the way in which we execute the prompting for the various experiments. 

\paragraph{Counting experiments.} 

For the sum experiment we prompt as:
\\
\texttt{Please perform the following sum: {seq}. Please give the answer on the final line exactly as 'The final answer to your maths question is: xxxx', where 'xxxx' is your answer.}. 
\\
For the ones and zero sequences, we similarly prompt as 
\\
\texttt{Please count the number of ones in the following sequence:{seq}. Please give the answer on the final line exactly as 'The final answer to your maths question is: xxxx', where 'xxxx is your answer.}
\\
For the word counting experiment, we prompt as
\\
\texttt{Please count the number of times `car' appears in the following sentence: '{seq}'. Please give the answer on the final line exactly as 'The final answer to your maths question is: xxxx', where 'xxxx' is your answer.}

For the CoT experiments, we supply examples of the form:

\texttt{Let's think step by step, showing me your reasoning. Here are a few examples:}
\\
\texttt{Please perform the following sum: 1 + 1 + 1 + 1 + 1 + 1 + 1 + 1 + 1 + 1 + 1 + 1}
\\
\texttt{We divide the sum into groups of 5.
(1 + 1 + 1 + 1 + 1) + (1 + 1 + 1 + 1 + 1) + 1 + 1}
\\
\texttt{The answer is then 2 * 5 + 2 = 12}
\\
\texttt{The final answer to your maths question is: 12}
\\
\texttt{Please perform the following sum: 1 + 1 + 1 + 1 + 1 + 1}
\\
\texttt{We divide the sum into groups of 5.}
\\
\texttt{(1 + 1 + 1 + 1 + 1) + 1}
\\
\texttt{The answer is then 1 * 5 + 1 = 6}
\\
\texttt{The final answer to your maths question is: 6}

With similar strategies for the $4$ experiments. 

\paragraph{Copying experiments.}
For the copying experiments, we use the following prompt:
\\
\texttt{Consider the following sequence: {seq}. What is the last digit in this sequence? Please answer exactly as `The answer to your question is: <ANSWER>'}
\\
and change appropriately the sequence as described. 
\\ \\
We commit to releasing the code we have used to generate the prompts in the near future.  

\subsection{Counting with Gemma}
We report similar results for the counting experiments using Gemma \cite{team2024gemma} in Figure \ref{fig:counting-experiments-gemma} and \ref{fig:frequency-histogram-gemma}. Compared to Gemini 1.5, Gemma seems to answer less accurately on the counting prompts.

\label{app:exps}
\begin{figure}[h!]
    \centering
    \includegraphics[width=\textwidth]{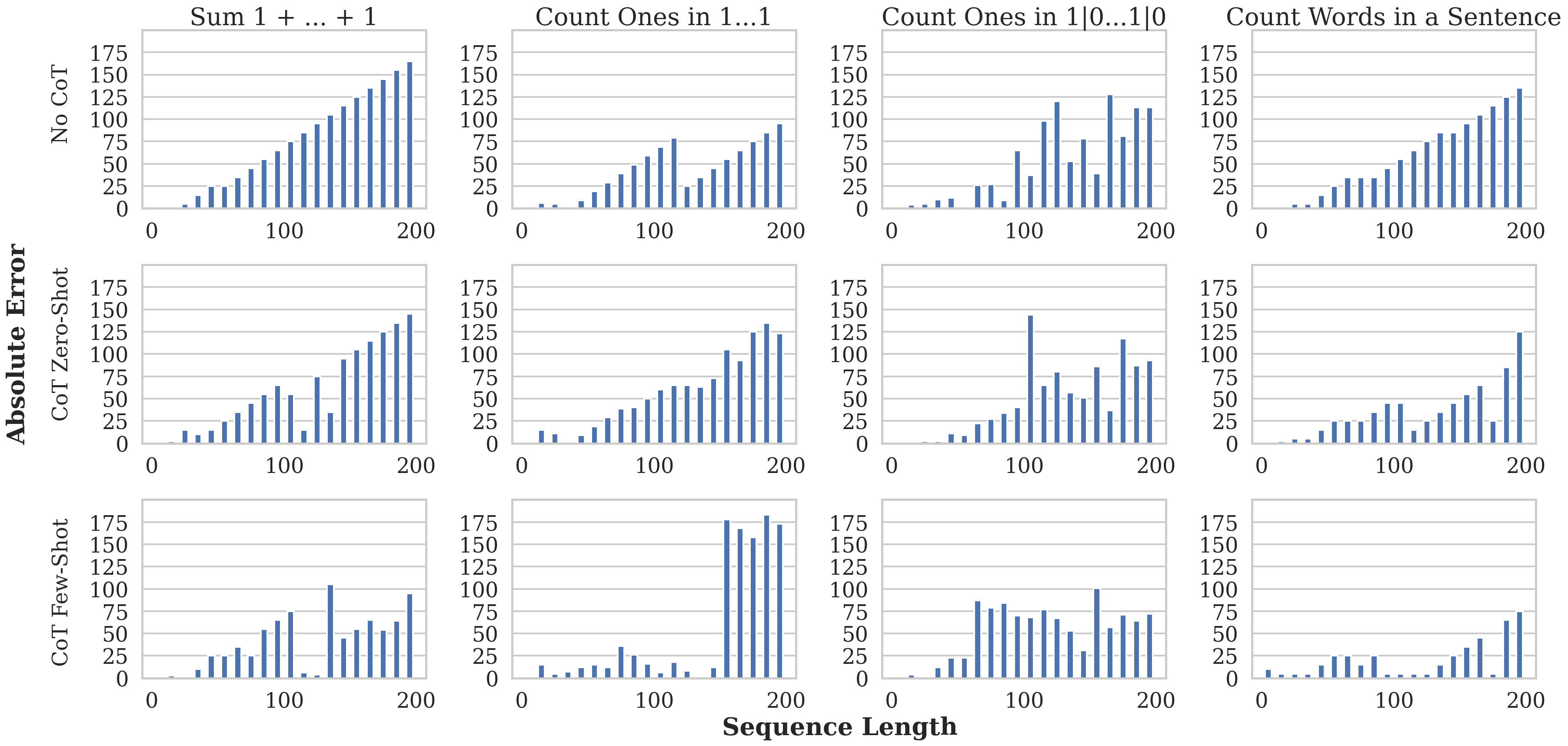}
    \caption{Gemma 7B LLMs being prompted to (i) sum $1+\dots+1$ (left), (ii) Count the number of ones in a sequence of 1s (center), and (iii) Count the number of ones in a sequence of ones and zeroes (the sequence is a Bernoulli sequence with probability of sampling a one being 0.7) (right).}
    \label{fig:counting-experiments-gemma}
\end{figure}

\begin{figure}[h!]
    \centering
    \includegraphics[width=\textwidth]{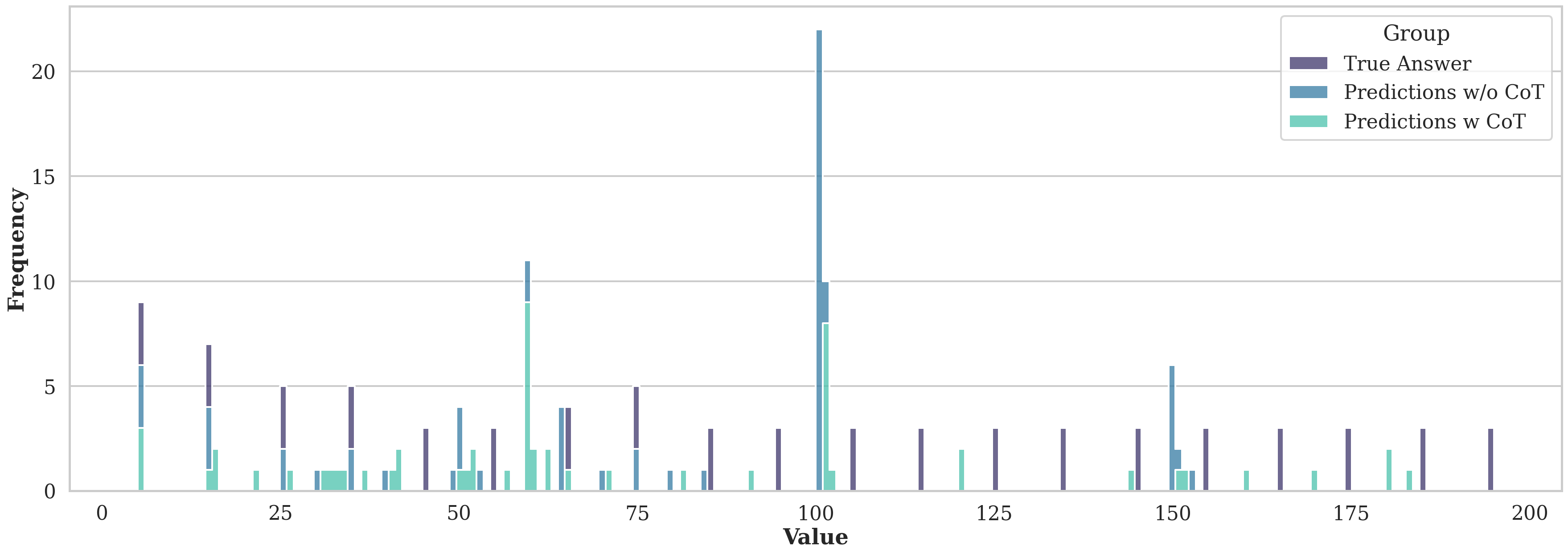}
    \caption{Frequency of different outputs for Gemma 7B}
    \label{fig:frequency-histogram-gemma}
\end{figure}

\subsection{Synthetic Experiments on Representational Collapse}
To provide further experimental evidence of representational collapse with other positional encodings, we experiment using the original sinusoidal embeddings from \cite{vaswani2017attention}. We sample key, query, and values from a Gaussian distribution with variance $\sigma^2 = 1/d$, with $d$ the dimension of the embeddings. We set $d=64$ and otherwise follow the exact structure of the decoder-only Transformer presented in the original Transformer paper. We experiment with a single attention layer and check the convergence of the representations of the final token between a sequence of length $n$ and a sequence of length $n + 1$ in which we simply copy the final token. We present the results in Figure \ref{fig:sinusoidal-convergence}. We see how also for sinusoidal PEs with key, queries, and values randomly sampled, the convergence still occurs.

\begin{figure}[h!]
    \centering
    \includegraphics[width=0.8\textwidth]{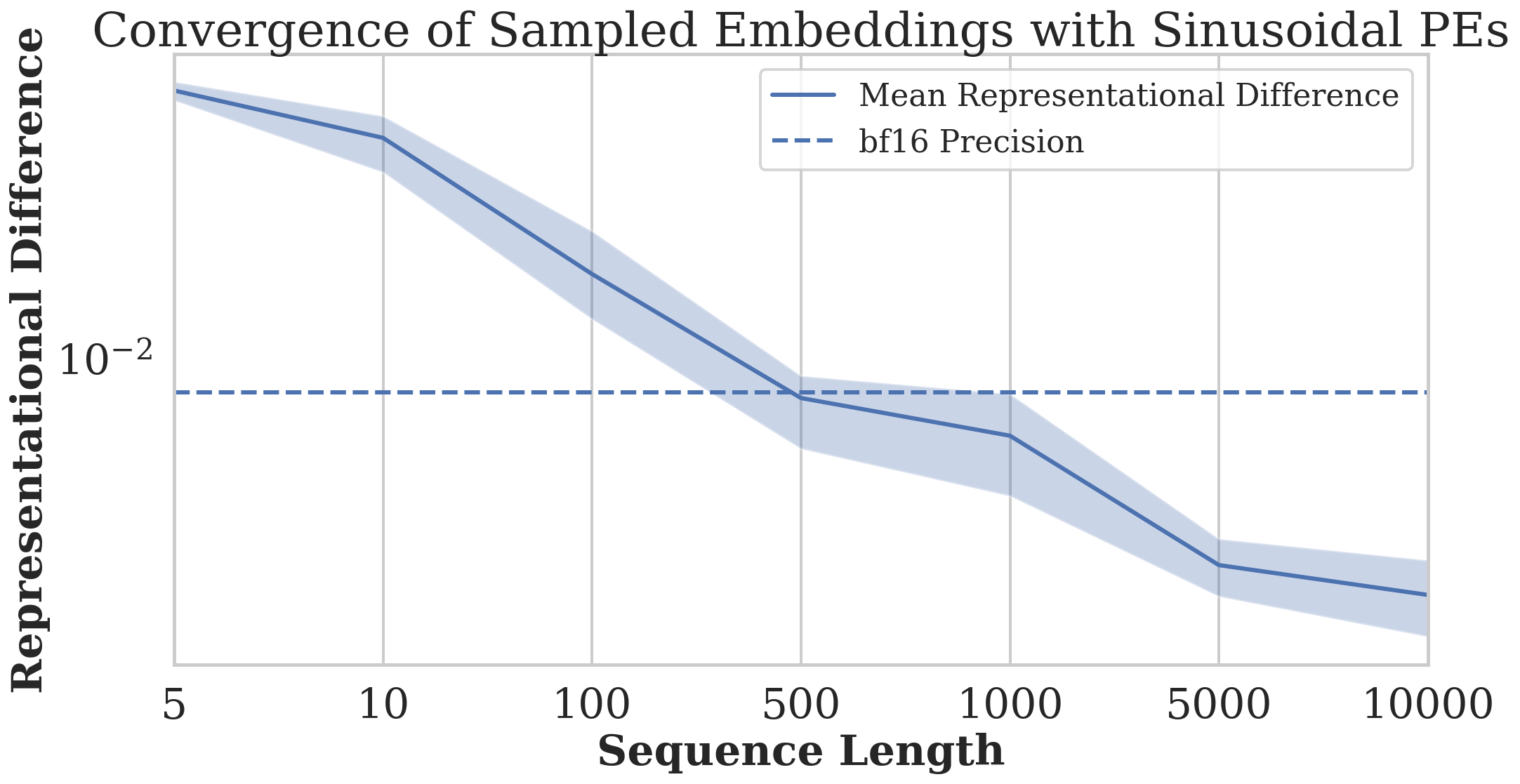}
    \caption{Convergence behaviour with a synthetic Transformer experiment. We sample the key, query, and values from a Gaussian distribution and apply the traditional sinusoidal PEs from \cite{vaswani2017attention}. We apply a logarithmic scale on the y-axis.}
    \label{fig:sinusoidal-convergence}
\end{figure}

Finally, we test the decay of the total variation of the softmax distributions of two growing sequences, to experimentally verify Lemma \ref{app:lemma:variation-decay}. We sample a sequence $\xb$ of length $n$ with values uniformly distributed in the range $[0, 1]$. We then create $\xb^*$ by adding to the first $k=200$ elements of $\xb$ noise which is uniformly sampled between $[0, 0.1]$. In Figure \ref{fig:total-variation-decay}, we show how the total variation between their respective softmax distributions decays with the sequence length. 

\begin{figure}[h!]
    \centering
    \includegraphics[width=0.8\textwidth]{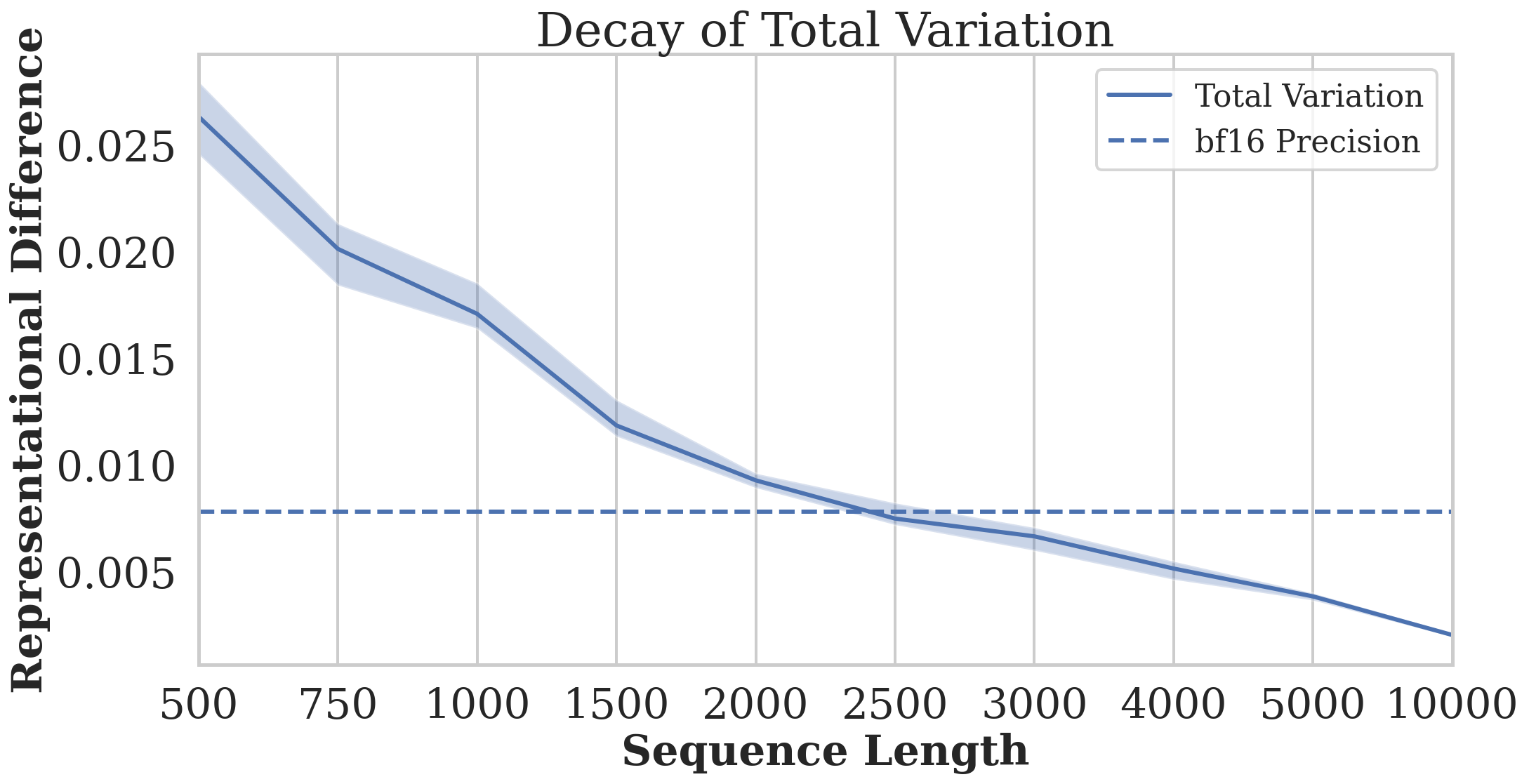}
    \caption{Total variation decay of softmax distributions with growing sequence length. We sample $n$ elements uniformly from $[0, 1]$ and then create a related sequence by taking its first $k=200$ and adding to these elements noise sampled uniformly from $[0, 0.1]$. We measure the total variation between their softmax distributions. It is clear how the total variation decays with length, in accordance with Lemma \ref{app:lemma:variation-decay}. Error bars show minimum and maximum over $5$ seeds.}
    \label{fig:total-variation-decay}
\end{figure}

\subsection{Effect of Positional Encodings}
We include a synthetic experiment where we verify the occurrence of the representational collapse phenomenon with different Positional Encodings, namely: Alibi \cite{press2021train}, the original Absolute Positional Encodings (APE) \cite{vaswani2017attention}, and No Positional Encodings (NoPE) \cite{kazemnejad2024impact}. In our synthetic experiment, we sample $n$ queries and keys independently directly from a standard Gaussian. We then construct a related sequence of length $n+1$ by repeating its last element. We report the $L_1$ distance between the two sequences after a single decoder-Transformer layer, as done for the other representational collapse experiments. We consider a Transformer with a hidden dimension of $64$, a single attention head, and we apply normalisations to simulate layer norm. The Transformer is not trained, but only used to simulate the propagation of information of queries and keys sampled from a Gaussian distribution. The results are shown in Figure \ref{fig:collapse-different-PEs}. Representational collapse seems to occur with all $4$ positional encodings, with the convergence of the representations happening at a similar sequence lengths. 

We would like to highlight that our condition on the decay of RoPE necessary to fulfill the requirement of Theorem \ref{thm:convergence-of-reps} is inspired by claims of the decay of RoPE coming from the original work by Su et al. \cite{su2024roformer}. However, recent work has shown that such claims may not be strictly always upheld \cite{barbero2024round}, as the original claims relied on very specific conditions on the queries and keys. We are of the opinion; however, that the range of synthetic and real-world experiments in this work support our representational collapse claims in practice with RoPE. A more precise mathematical treatment of RoPE specifically is therefore left as future work.

\begin{figure}[h!]
    \centering
    \includegraphics[width=0.8\textwidth]{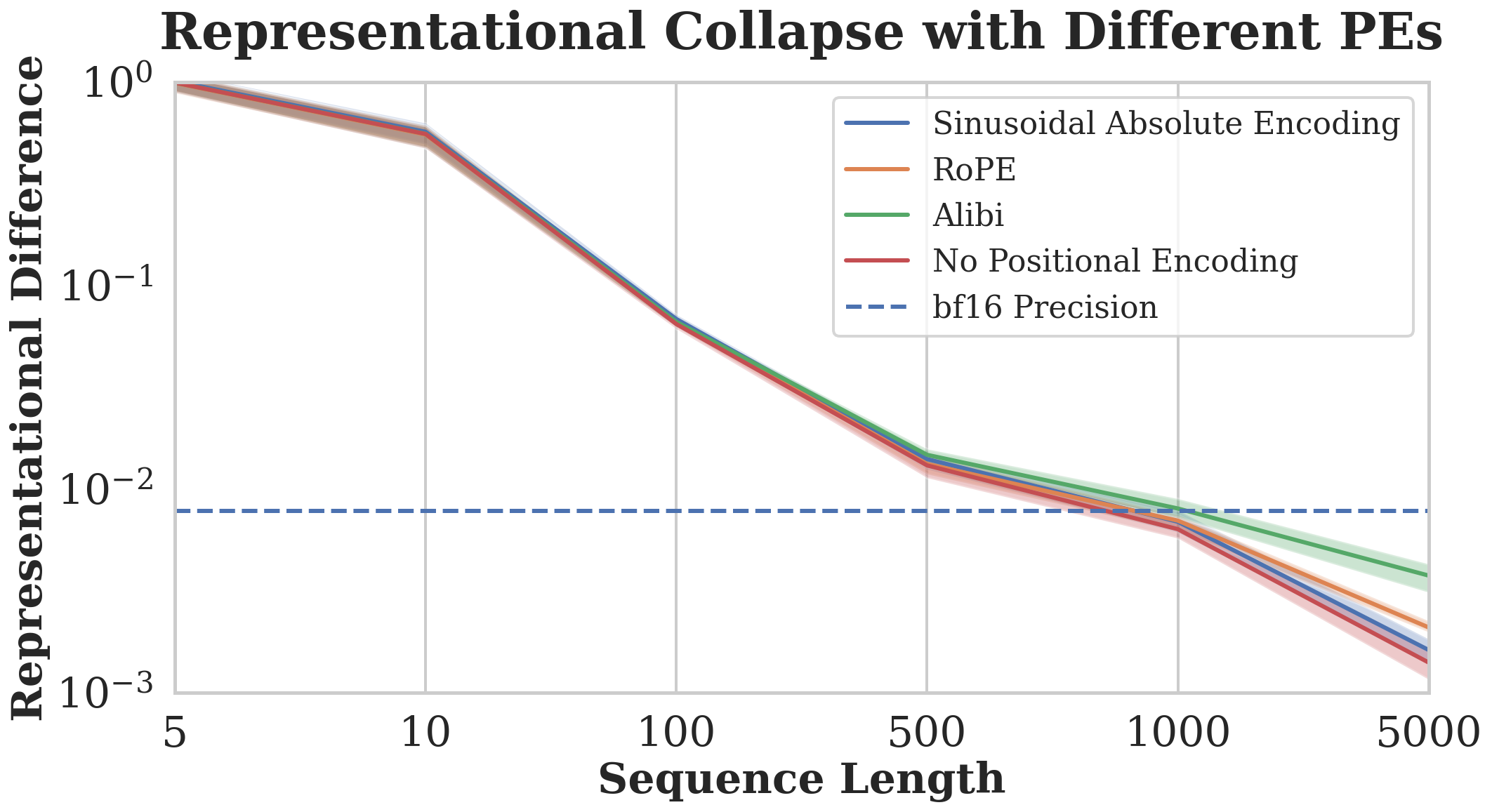}
    \caption{We sample $n$ queries, keys, and values independently from a standard Gaussian, applying different positional encodings. We then construct sequences of length $n+1$, by repeating the $n$-th token. We report the $L_1$ distance between the last tokens of the sequences of length $n$ and $n+1$ after one decoder-only Transformer layer. We set the hidden dimension to $64$, use a single attention head, and normalise appropriately to simulate the effects of LayerNorm. The y-axis is shown in log-scale.}
    \label{fig:collapse-different-PEs}
\end{figure}

\subsection{Ablation on Prompt Structure}
We ablate the prompt structure specifically for the copying task. In particular, we consider the prompts: (Type 1) ``What is the last digit of the following sequence? Please answer exactly as `The answer to your question is: \textless ANSWER\textgreater' ''. Here is the sequence: \{seq\} and (Type 2) ``Please answer exactly as `The answer to your question is: \textless ANSWER\textgreater'. What is the last digit of the following sequence? \{seq\}''. The results are presented in Figure \ref{fig:different-prompt-styles}. We find that the prompt indeed does affect the performance on the task, as the prompt affects the distribution of the attention over the layer. However, for both types of prompts, the model ends up failing, in accordance with our theory. We also show for completeness, in Figure \ref{fig:rep-collapse-formatting-instructions}, that representational collapse occurs in Gemma 7B also for the `Type 1' prompt. 

\begin{figure}[h!]
    \centering
    \includegraphics[width=0.45\textwidth]{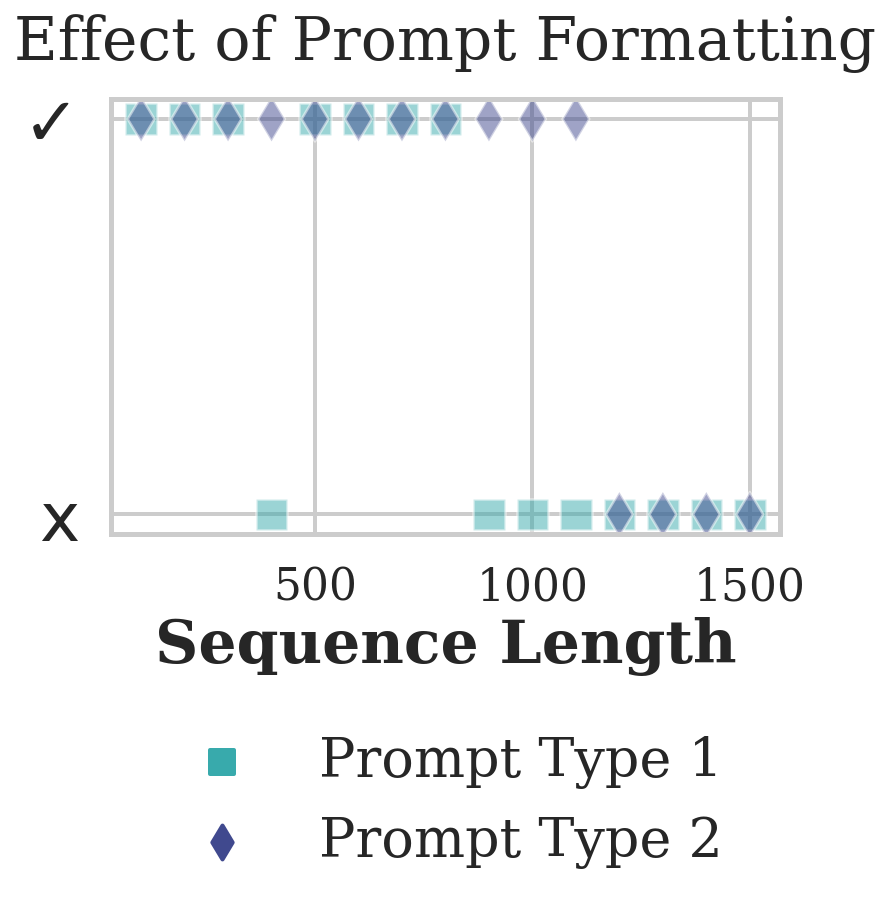}
    \caption{Performance of a Gemini model on the following prompts: (Type 1) ``What is the last digit of the following sequence? Please answer exactly as `The answer to your question is: \textless ANSWER\textgreater' ''. Here is the sequence: \{seq\} and (Type 2) ``Please answer exactly as `The answer to your question is: \textless ANSWER\textgreater'. What is the last digit of the following sequence? \{seq\}''}
    \label{fig:different-prompt-styles}
\end{figure}

\begin{figure}[h!]
    \centering
    \includegraphics[width=0.8\textwidth]{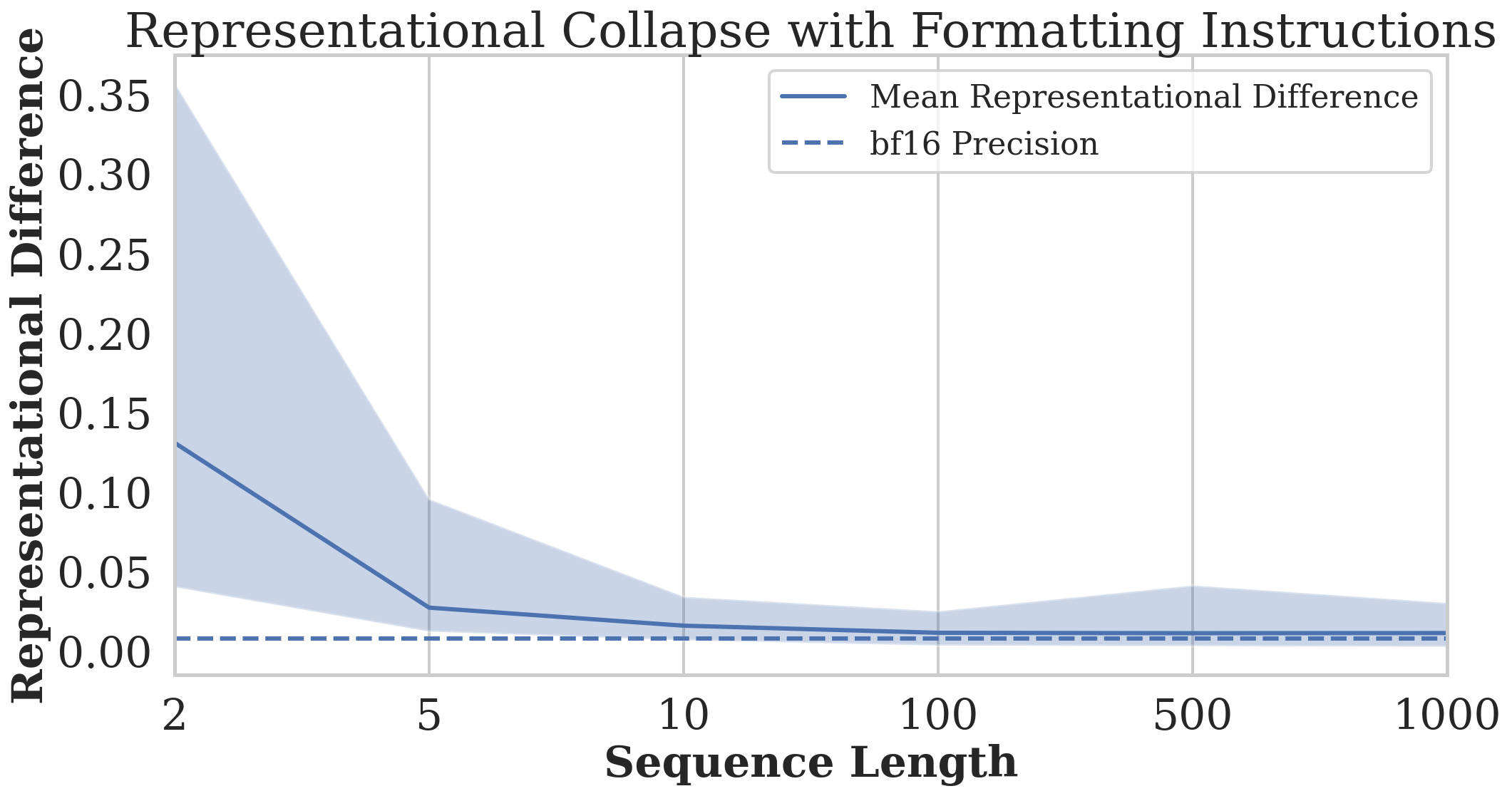}
    \caption{Representational collapse in Gemma for the prompt: ``What is the last digit of the following sequence? Please answer exactly as `The answer to your question is: \textless ANSWER\textgreater' ''. Here is the sequence: \{seq\} and (Type 2)  ''}
    \label{fig:rep-collapse-formatting-instructions}
\end{figure}

\subsection{Local sliding window attention}
A fundamental limitation of an attention mechanism that leverages the softmax function is that it cannot remain sharp, especially as the sequence length grows \cite{velivckovic2024softmax}. This is in fact a key intuition that we exploit to show our result on representational collapse. A good way to address representational collapse and the related phenomenon of over-squashing is then that of limiting the spread of the softmax function, by directly limiting the amount of tokens the attention mechanism pays attention to. This mechanism is often referred to as a \emph{local sliding window} and is a major architectural change present in Gemma 2 \cite{team2024gemma2}. We believe that such an architectural change elegantly addresses representational collapse and over-squashing at the source as it avoids the issues that come with growing token sequences -- something which our theory often exploits.

\end{document}